\definecolor{ForestGreen}{rgb}{0.1333,0.5451,0.1333}
\definecolor{DarkRed}{rgb}{0.8,0,0}
\definecolor{Red}{rgb}{1,0,0}
\newtheorem{theorem}{Theorem}[section]
\newtheorem{corollary}[theorem]{Corollary}
\newtheorem{lemma}[theorem]{Lemma}
\newtheorem{observation}[theorem]{Observation}
\newtheorem{proposition}[theorem]{Proposition}
\newtheorem{claim}[theorem]{Claim}
\newtheorem{definition}[theorem]{Definition}
\newtheorem*{theorem*}{Theorem}
\newtheorem*{hypothesis*}{Hypothesis}
\newtheorem*{corollary*}{Corollary}
\newtheorem*{conjecture*}{Conjecture}
\newtheorem*{lemma*}{Lemma}
\newtheorem*{thm*}{Theorem}
\newtheorem*{prop*}{Proposition}
\newtheorem*{obs*}{Observation}
\newtheorem*{definition*}{Definition}
\newtheorem*{remark*}{Remark}
\newtheorem*{rec*}{Recommendation}
\newenvironment{fminipage}%
  {\begin{Sbox}\begin{minipage}}%
  {\end{minipage}\end{Sbox}\fbox{\TheSbox}}
\def\defeq{\stackrel{\mathrm{def}}{=}}
\def\floor#1{\left\lfloor #1 \right\rfloor}
\def\calF{\mathcal{F}}
\def\calS{\mathcal{S}}
\def\calN{\mathcal{N}}
\def\calM{\mathcal{M}}
\def\calV{\mathcal{V}}
\def\calP{\mathcal{P}}
\def\calA{\mathcal{A}}
\newcommand\R{\mathbb{R}}
\newcommand\N{\mathbb{N}}
\newcommand{\E}[1]{\mathop{{}\mathbb{E}}\left[#1\right]}
\newcommand\Z{\mathbb{Z}}
\renewcommand{\forall}{\mathrm{\text{ for all }}}
\renewcommand{\tilde}{\widetilde}
\DeclareFontFamily{U}{mathb}{\hyphenchar\font45}
\DeclareFontShape{U}{mathb}{m}{n}{<5> <6> <7> <8> <9> <10> gen * mathb
<10.95> mathb10 <12> <14.4> <17.28> <20.74> <24.88> mathb12}{}
\DeclareSymbolFont{mathb}{U}{mathb}{m}{n}
\DeclareMathSymbol{\rcirclearrow}{\mathbin}{mathb}{'367}
\newif\ifrandom
\newcommand{\todolater}[1]{}
\newcommand{\cA}{\calA}
\newcommand{\cF}{\mathcal{F}}
\DeclareMathOperator{\Unif}{Unif}
\DeclareMathOperator{\score}{score}
\title{Algorithmic Thinking Theory}
\date{}
\author{MohammadHossein Bateni \\
Google\\
\texttt{bateni@google.com}
\and
Vincent Cohen-Addad \\
Google\\
\texttt{cohenaddad@google.com}
\and
Yuzhou Gu \\
NYU \\
\texttt{yuzhougu@nyu.edu}
\and
Silvio Lattanzi \\
Google\\
\texttt{silviol@google.com}
\and
Simon Meierhans\thanks{Simon Meierhans is supported by a Google PhD Fellowship.}\\
ETH Zurich \\
\texttt{mesimon@inf.ethz.ch}
\and
Christopher Mohri\thanks{Work done while at Google.} \\
Stanford, Google \\
\texttt{xmohri@stanford.edu}
}
\begin{document}

\maketitle

\begin{abstract}
  Large language models (LLMs) have proven to be highly effective for solving complex reasoning tasks. Surprisingly, their capabilities can often be improved by iterating on previously generated solutions. In this context, a reasoning plan for generating and combining a set of solutions can be thought of as an algorithm for reasoning using a probabilistic oracle.

  We introduce a theoretical framework for analyzing such reasoning algorithms. This framework formalizes the principles underlying popular techniques for iterative improvement and answer aggregation, providing a foundation for designing a new generation of more powerful reasoning methods.

  Unlike approaches for understanding models that rely on architectural specifics, our model is grounded in experimental evidence. As a result, it offers a general perspective that may extend to a wide range of current and future reasoning oracles.
\end{abstract}

\newpage

\section{Introduction}

The rapid advancement of Large Language Models (LLMs) has marked a paradigm shift in artificial intelligence, with models demonstrating superhuman performance on a wide array of language benchmarks and impressive capabilities on complex reasoning tasks \cite{ahn-etal-2024-large, yan-etal-2025-survey}. Initial challenges, such as grade-school mathematics (GSM8K) and standard competition math (MATH dataset), have largely been surmounted, pushing the frontier of AI reasoning toward ``grand challenge'' problems, such as those found in the International Mathematical Olympiad (IMO).

These problems, renowned for their demand for deep insight, creativity, and rigorous proof, expose a fascinating weakness in modern LLMs. While a model's performance on a single attempt (termed $pass@1$) may be very low, its ability to produce a correct answer within $k$ attempts ($pass@k$) can be significantly higher. This $pass@1$ versus $pass@k$ gap, especially pronounced when sampling with high temperature to produce diverse outputs, suggests that models possess a vast, latent capability that is not accessible in a single, high-confidence generation.

Interestingly, to recover the full power of the model it is not sufficient to simply use multiple attempts. In fact, even the $pass@k$ metric fails to capture the full story. On the most difficult problems, simply sampling $k$ times and selecting the best answer (e.g., ``best-of-32'') still yields poor results. For instance, Huang and Yang (2025) report that a best-of-32 baseline on the IMO 2025 problems achieved an accuracy of only 31.6--38.1\% for leading models \cite{huang2025imo}. This paradox lies at the heart of our work: the latent capability of LLMs is not merely a matter of \emph{selection} (finding one correct needle in a haystack of $k$ attempts), but one of \emph{synthesis}.
While established techniques like Self-Consistency~\cite{wang23} leverage majority voting to improve reliability, they remain bound by the quality of individual samples.
The ``deep'' reasoning ability of the model appears to be distributed across multiple, diverse, and often individually flawed chains of thought \cite{wei22}.

This shift highlights a growing focus on scaling inference-time compute---leveraging more computational resources \emph{after} training to boost performance on hard tasks.
These algorithms effectively simulate a ``System 2'' reasoning process atop the base model's ``System 1'' generations~\cite{Kahn11}, iteratively refining and synthesizing information to reach conclusions that are unreachable in a single forward pass.
While empirical results show this works, we lack a formal theory governing the efficient allocation of this test-time budget. Existing theoretical work often focuses on the expressivity of standard Transformer forward passes, leaving the dynamics of these iterative, multi-call reasoning systems unexplored.

While the theoretical understanding of LLMs reasoning is still limited, the hypothesis that the ability to synthesize is a key ingredient in LLMs success is strongly backed by recent empirical breakthroughs.
Work such as \emph{Reflexion}~\cite{shinn23} demonstrated the value of verbal reinforcement for iterative improvement, while Huang and Yang (2025) showed that a similar model-agnostic, multi-stage ``verification and refinement pipeline'' can achieve a stunning 85.7\% accuracy (5 out of 6 problems) on the same IMO 2025 dataset \cite{huang2025imo}. Explicitly showing that a complex reasoning procedure---which iteratively generates, improves, verifies, and refines a solution---is able to harness the model's latent abilities to achieve a result far beyond what simple sampling could.
Earlier structured approaches like \emph{Tree of Thoughts}~\cite{Yao23} attempted to harness latent reasoning by framing it as a search over intermediate steps. While effective for moderate complexity, grand challenge problems have required even more involved, iterative pipelines.

A similar, complementary approach is proposed by Venkatraman et al. (2025) with their ``Recursive Self-Aggregation'' (RSA) algorithm \cite{Ven25}. Inspired by evolutionary methods, RSA maintains a \emph{population} of candidate solutions. In each step, it refines this population by ``aggregating'' subsets of solutions, prompting the model to combine their useful ideas and produce a new, improved generation. RSA is shown to ``enable bootstrapping from partially correct intermediate steps within different chains of thought'' \cite{Ven25}. Again, this demonstrates a mechanism for \emph{synthesis}, not selection.

These empirical successes present a clear theoretical gap. We have powerful, practical examples of ``reasoning algorithms''---like the IMO pipeline or RSA---that unlock performance far exceeding a model's baseline. Yet, we lack a formal theory to understand \emph{why} they work and \emph{how} to design them. What properties of an LLM (as a ``reasoning oracle'') make aggregation and refinement effective? How do we model the trade-off between parallel branching (like RSA) and sequential depth (like the IMO pipeline)?

This paper initiates the theoretical study of the ``algorithmic thinking'' procedures. We formalize the components of these complex reasoning processes. We introduce the concept of a \textbf{reasoning oracle}, $\mathcal{A}$, which takes a \emph{context} of previously generated solutions $C \subseteq \mathcal{S}$ and produces a new solution $s \in \mathcal{S}$. The probability of success is governed by a \textbf{transfer function}, $\mathcal{F}$, which models the quality of the output based on the quality of the solutions in the context.

This formalism allows us to precisely captures the empirical techniques described so far. For example, the classic baseline $pass@1$ performance is the success probability with an empty context, $\mathcal{F}(\emptyset)$, the $pass@1$ vs. $pass@k$ gap is captured by  $\mathcal{F}(C) > \mathcal{F}(\emptyset)$ when $C$ contains $k$ solutions generated by $\mathcal{F}(\emptyset)$.

Finally, more complex techniques like Recursive Self-Aggregation \cite{Ven25} can be described by an algorithm that iteratively applies $\mathcal{F}$ to contexts $C$ with $k > 1$ solutions.

Within this framework, we can now move from empirical observation to theoretical analysis. In particular, we first formalize the strategies seen in practice by defining the \texttt{Branching Algorithm} (\cref{alg:branching}), which models tree-like synthesis, and the more efficient \texttt{Genetic Algorithm} (\cref{alg:genetic}), which re-uses solutions from a previous population, directly connecting to the evolutionary approach of RSA \cite{Ven25}. We also study the \texttt{Random Sampling Algorithm} (\cref{alg:random_sampling}), a simple algorithm that reuses a random subset of previous solutions in each step.

Our central analysis hinges on a key property of the oracle: \emph{monotonicity} (\Cref{defn:monotone}), the formal assumption that a ``better'' context of solutions leads to a better output. We then introduce and study the class of \texttt{Decaying Models} (\cref{def:decaying_model}), establishing results on algorithm behavior on such models. Then we will consider several special cases, including the \texttt{Uniform Model} (\cref{def:uniform_model}) and two models where the oracle performance degrades as context size increases (\cref{def:exp_decay,def:poly_decay}).
The former is the simplest non-trivial model under our framework, which already shows interesting behavior; the latter is closer to real-world behavior, capturing the experimental observation that performance can degrade as context size becomes too large.

\subsection{Our Results}
In \cref{sec:succ_prob}, we characterize the \emph{limits} of these algorithms and show that they obtain the maximum achievable success probability for Decaying Models (\Cref{prop:binary_opt}). We also analyze the \emph{efficiency} of the algorithm by establishing the convergence rate of the algorithm to optimal success probability (\cref{sec:convergence}).
Then we make further discussions on the Uniform Model and specific Decaying Models in \cref{sec:uniform} and \cref{sec:decay}, respectively.
Our goal is to move beyond empirical successes and develop a rigorous theory for designing and analyzing the next generation of reasoning algorithms.

In \Cref{sec:experiments}, we provide experimental evidence to ground our modeling choices.

\subsection{Technical Overview}

We model a reasoning oracle as a probabilistic function $\mathcal{A}$ that samples from a solution space $\calS$. The function can either be called with empty context $C = \emptyset$, or it can be provided with some previous solutions sampled from the solution space. If it is called with non-empty context, the properties of the provided solutions alter the distribution from which the solutions are sampled.

In this article, we focus on the following simple setting. Each solution is either correct or wrong, and the distribution from which $\mathcal{A}$ samples only depends on a) if there is a correct solution in the provided context and b) the number of solutions in the context. This allows us to capture two important properties observed in practice. Providing correct solutions helps, and hiding a correct solution alongside a lot of wrong solutions decreases its usefulness.

\paragraph{Models. } Our framework leaves significant flexibility for modeling. We focus our initial study on the class of Decaying Models, where the success probability depends only on whether there is a correct solution in the context $C$, and the context size $|C|$.

We derive the maximum possible success probability one can boost to for such models, and derive the convergence rate to this success probability for the algorithms we consider.

Then we study the following two special cases of the Decaying Model.
\begin{enumerate}
  \item The simplest possible model succeeds with probability $q > p$ whenever there is a correct solution in the context $C$ and $|C| \leq k$ (See \Cref{sec:uniform}).
  \item To better capture the experiments, we replace the fixed probability $q$ with some function $f$ that decays with the context length $|C|$. In this article, we study exponential and polynomial decay functions (See \Cref{sec:decay}).
\end{enumerate}

We give efficient algorithms for boosting the success probabilities in terms of the number of oracle calls. In many regimes, these are small multiples of $1/p$, the number of oracle calls needed for obtaining a single correct solution among all the solutions contained so far.

It may be unreasonable to hope to solve a very difficult problem without providing any context. We therefore propose various ideas for generalizing our model to partial solutions in \Cref{sec:conclusion}, and hope that such a model can be both motivated by experiments and analyzed in the future.

\subsection{Other Theoretical Models for LLMs}

Early theoretical work focused on the expressivity of the architecture, proving that Transformers are universal approximators of sequence-to-sequence functions, meaning they can theoretically model any such function as then number of parameters approaches infinity \cite{yun2020are}. Since then, there have been various proposals for understanding the experimentally observed benefit of providing context.

A leading theoretical approach explains in-context learning (ICL) as an algorithmic mechanism called ``Induction Heads.'' These are 2-layer circuits that allow the model to copy and complete patterns from the context~\cite{olsson2022incontext}.
Another major theoretical angle formalizes ICL as an algorithm learning problem, where the Transformer implicitly constructs a hypothesis function at inference time~\cite{li2023transformers}. Theoretical work has shown that Transformers trained by gradient flow can provably learn classes of linear models in-context~\cite{garg2022what}.

Recent work has moved beyond showing CoT helps to proving it is theoretically required for certain classes of problems.
A recent, pivotal study proved rigorous separation results, showing that bounded-depth Transformers cannot directly solve basic arithmetic or linear equations unless their size grows super-polynomially. However, with CoT (autoregressive intermediate steps), they can solve these, effectively allowing them to handle problems up to P-complete (like dynamic programming)~\cite{feng2024revealing}.
Newer theoretical frameworks have derived scaling laws for an ``optimal'' CoT length, proving that while CoT is necessary, excessively long chains are susceptible to noise and error accumulation (``overthinking'')~\cite{zhang2025more}. In our model, we explain this behavior as the context becoming increasingly correlated and therefore providing less value.

Another line of complexity theoretic work established that both Transformers and SSMs generally fall into the TC0 computational complexity class~\cite{sarrof2024expressive}.
Despite being in the same broad class, they have distinct theoretical strengths. Transformers are provably better at state tracking and copying from long history (due to their ability to attend to any previous state), while SSMs excels at different types of structured state tracking without needing a full history cache.

Finally, recent work models grokking as a phase transition where the network shifts from a high-complexity ``memorization'' phase to a low-complexity ``generalization'' phase, effectively compressing the data~\cite{kumar2024complexity}.
More recent studies have begun linking this strictly to internal mechanisms, such as the evolution of routing pathways in Mixture of Experts (MoE) models during pretraining~\cite{grokking2025moe}.

\section{Preliminaries} \label{sec:prelim}

\paragraph{Notations.} We let $\log(\cdot)$ refer to the natural logarithm and $[n] = \{1, \ldots, n\}$. $\N = \Z_{\ge 0}$ denotes the set of non-negative integers.

\paragraph{Question.} Throughout the article, we fix a question $Q$ which we seek to answer.

\paragraph{Solution space and quality.} Let $\calS$ be the space of possible solutions of question $Q$ and $\calV \subseteq \R$ denote the space of possible quality scores. We associate with $\calS$ a scoring function $\score: \calS \to \calV$. For every solution $s \in \calS$, the value $\score(s)$ assesses to what extent solution $s$ answers question $Q$. A reasoning algorithm does not have access to the function $\score$. In this article, we will focus on $\calV = \{0,1\}$, i.e., the scoring function where every solution has a binary score.

\paragraph{Reasoning oracle.} To produce answers to question $Q$, we will exclusively use an oracle $\calA$. This oracle is a randomized algorithm that given context $C \subseteq \calS$ samples a solution $s \in \calS$. The distribution from which the oracle samples depends only on the quality of the solutions in the context $C$.

We assume that distribution of the score of the output of the oracle depends only on the multiset of scores in the context $C$, in the following way. Let $\calM(\calV)$ denote the set of all finite multisets with elements in $\calV$, and $\calP(\calV)$ denote the set of distributions over $\calV$.
We assume there exists a transfer function $\cF: \calM(\calV) \to \calP(\calV)$ such that the distribution of the score of the output of the oracle on context $C$ is given by $\cF(\{\score(s): s\in C\}_{\text{multiset}})$.
In particular, $\cF(\emptyset)$ denotes the distribution of scores when the oracle is called with an empty context.

\paragraph{Reasoning algorithm.} An $(\calA, n)$-reasoning algorithm is a procedure that calls the oracle $\calA$ at most $n$ times to output a solution $s$.
The goal is to maximize expected score $\E{\score(s)}$ of the final output $s$.
If $\calV = \{0, 1\}$, then we say a reasoning algorithm succeeds if $\score(s) = 1$.

In general, a reasoning algorithm has the form $(S_1,\ldots,S_n)$, where $S_k \subseteq [k-1]$ indicates the previous outputs used as context for the $k$-th oracle call.
That is, for $k=1,\ldots,n$, the algorithm calls the oracle $\calA$ with context $\{s_i: i \in S_k\}_{\text{multiset}}$ to obtain $s_k$.

We can represent a reasoning algorithm as a DAG with $n$ nodes, where each node corresponds to an oracle call, and there is a directed edge from node $i$ to node $j$ if $i \in S_j$. The $n$-th node corresponds to the final output of the algorithm.
We define the \emph{depth} of an algorithm as the length (number of edges) of the longest paths in its DAG representation.
The depth captures the inference time of the reasoning algorithm, assuming infinite parallelism.

\paragraph{Answer population.} If the image of the $\score$ function is $\{0, 1\}$, then we call a subset $S \subset \calS$ a $p$-correct answer population if at least a $p$ fraction of the elements in $S$ have score $1$. For the rest of the article, we let $\calV = \{0,1\}$.\footnote{Some of the general results presented in \Cref{sec:succ_prob} apply to general sets $\calV$.}

\paragraph{Decaying Model.}
Most of our results focus on a class of models called the decaying model, defined as follows.
\begin{restatable}[Decaying Model]{definition}{decayingModel}\label[definition]{def:decaying_model}
  Let $\calV = \{0,1\}$. Let $f: \N \to [0, 1]$ and $g: \N \to [0, 1]$ be two functions satisfying $f(k) \ge g(k)$ for all $k\in \N$.
  The decaying model $\calA^{(f,g)}$ is defined as the oracle satisfying for any context $C \subseteq \calS$:
  \begin{align*}
    \Pr[\score(\calA^{(f,g)}(C)) = 1] =
    \begin{cases}
      f(|C|), & \text{if } \sum_{s \in C}\score(s) \geq 1, \\
      g(|C|), & \text{if } \sum_{s \in C}\score(s) = 0.
    \end{cases}
  \end{align*}
\end{restatable}

We note that the functions $f$ and $g$ are typically monotonically decreasing, which motivates the nomenclature. For this decaying model, we are able to give a precise characterization of the optimal success probability achievable by reasoning algorithms (see \cref{sec:succ_prob}).

Next we will introduce several special cases of the decaying model.

\paragraph{Uniform Model.}
The simplest model is the following.
\begin{restatable}[Uniform Model]{definition}{uniformModel}
\label[definition]{def:uniform_model}
  Let $0\le p \le q\le 1$ and integer $k\ge 1$.
  For the uniform model $\calA^{(p,q,k)}_u$, each call has $\score(\calA^{(p,q,k)}_u(C)) = 1$ independently with probability
  \begin{align*}
    p + (q - p) \mathbbm{1}\left\{\sum_{s \in C}\score(s) \geq 1 \text{ and } |C| \leq k\right\}
  \end{align*}
  and $\score(\calA^{(p,q,k)}_u) = 0$ otherwise.
\end{restatable}
The uniform model is the special case of the decaying model where $f(k) = q$ and $g(k) = p$ for all $k\in \N$.

If we remove the constraint $|C|\le k$ in the definition, then the optimal strategy is to always use all previous answers as context. We therefore limit the context size $|C|$ to be at most $k$ in the definition of the uniform model.

\paragraph{Exponentially and Polynomially Decaying Models.}
When implementing the oracle with a large language model, we experimentally observe a slow decay in model accuracy as the context size increases. We therefore study models that capture this decay more explicitly.

We are particularly interested in two natural decay functions, exponential decay and polynomial decay.

\begin{restatable}[Exponential Decay]{definition}{expDec}
\label[definition]{def:exp_decay}
  Let $0<p\le q\le 1$.
  The exponential decaying model is the decaying model $\calA^{(f,g)}$ where $f(k) = q^{k-1}$ and $g(k) = p\mathbbm{1}_{\{k = 0\}}$.
\end{restatable}

\begin{restatable}[Polynomial Decay]{definition}{polyDec} \label[definition]{def:poly_decay}
  Let $0<p\le q\le 1$.
  The polynomial decaying model is the decaying model $\calA^{(f,g)}$ where $f(k) = \frac 1{k^q}$ and $g(k) = p\mathbbm{1}_{\{k = 0\}}$.
\end{restatable}

\section{Reasoning Algorithms} \label{sec:algorithm}

In this section, we introduce the  reasoning algorithms we study in this article:
\begin{enumerate}
	\item \underline{Branching Algorithm (\Cref{alg:branching}):} A simple algorithm that merges independently generated solutions in a tree-like fashion. It achieves optimal success probability for a broad class of oracles.
	\item \underline{Genetic Algorithm (\Cref{alg:genetic}):} A more efficient version of the branching algorithm that re-uses solutions to reduce the number of oracle calls. It approaches the branching algorithm as the population sizes grow.
	\item \underline{Random Sampling Algorithm (\Cref{alg:random_sampling}):} An algorithm that generates new solutions by randomly sampling from all previously generated solutions. It also achieves the optimal success probability, and has better convergence rate in certain settings.
\end{enumerate}

\paragraph{Branching algorithm. } The branching algorithm assembles a solution by initially generating a set of $pass@1$ solutions without providing any context. We call such solutions level $0$ solutions. Then, it builds a tree obtaining a set of level $i$ solutions via combining groups of $k_i$ solutions from level $i - 1$ until it creates a single level $L$ solution. See \Cref{alg:branching} for pseudo-code of this algorithm.

\begin{algorithm}[h]\caption{\textsc{BranchingAlgorithm}$(\calA, L, (k_1, \ldots, k_L))$)} \label{alg:branching}
	\If{$L = 0$}{
		\Return $\calA(\emptyset)$
	}
	$C \gets \emptyset$ \\
	\For{$j \in 1, \ldots, k_L$}{
		$C \gets C \cup \{\textsc{BranchingAlgorithm}(\calA, L - 1, (k_1, \ldots, k_{L - 1}))\}$
	}
	\Return $\calA(C)$
\end{algorithm}

From this construction, we observe that all solutions generated by this algorithm are completely independent, and that therefore the solutions at level $i$ also have to have the same guarantees. We conclude that all the solutions at level $i$ have the same success probability, and that any reasonable algorithm increases the success probability of the solutions over time (See \Cref{sec:succ_prob}).

\paragraph{Genetic Algorithm.} While the branching algorithm achieves the best possible success probability for a broad class of oracles (\Cref{prop:binary_opt}), it exponentially grows the number of solutions with the depth $L$. This quickly blows up the necessary computational resources. Therefore, genetic algorithms are often used in practice \cite{Ven25}. These essentially follow the template from \Cref{alg:branching}, but re-use solutions from level $i - 1$ multiple times to avoid branching as much. Concretely, they have a fixed population size for layer $i$, and each solution at layer $i > 0$ is created by sampling $k_i$ solutions from layer $i - 1$ uniformly at random with replacement. See \Cref{alg:genetic} for pseudo code of the algorithm and note that the population sizes $(s_1, \ldots, s_L)$ are an additional input. We observe that this algorithm approaches the branching algorithm as the population sizes go towards infinity. %

\begin{observation}\label{obs:genetic_branching}
	The genetic algorithm (\Cref{alg:genetic}) approaches the branching algorithm (\Cref{alg:branching}) when $s_1, \ldots, s_L \to \infty$ and $\frac{s_i}{s_{i + 1}} \to \infty$ for $i = 1, \ldots L - 1$.
\end{observation}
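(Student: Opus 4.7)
The plan is to couple the genetic and branching algorithms level by level and show that, under the stated limit, the distribution of any fixed finite subset of the layer-$i$ population in the genetic algorithm converges in total variation to an i.i.d.\ collection drawn from the layer-$i$ output distribution of the branching algorithm. Interpreting ``approaches'' as convergence in distribution of the final layer-$L$ output, this suffices, since the genetic algorithm can then return any single element of its layer-$L$ population and match the branching algorithm's output.

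First I would induct on $i = 0, 1, \ldots, L$. At layer $0$, both algorithms produce i.i.d.\ draws from $\calA(\emptyset)$, so the base case is immediate. For the inductive step, suppose the layer-$i$ population of size $s_i$ in the genetic algorithm is i.i.d.\ with the branching-algorithm layer-$i$ distribution $\mu_i$. The genetic algorithm forms each of the $s_{i+1}$ children at layer $i+1$ by sampling $k_{i+1}$ parent indices uniformly with replacement from $[s_i]$ and then invoking $\calA$ on the corresponding multiset. The only mechanism by which this can differ in law from the branching algorithm is that two distinct children may share a common parent, inducing correlations. Conditioned on the event $\mathcal{E}$ that the $s_{i+1}\cdot k_{i+1}$ parent indices drawn across all children are pairwise distinct, the $s_{i+1}$ children are conditionally i.i.d.\ with the branching-algorithm distribution $\mu_{i+1}$, because each child then uses $k_{i+1}$ fresh i.i.d.\ parents from $\mu_i$.

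Next I would bound $\Pr[\overline{\mathcal{E}}]$ by a birthday-style estimate: the probability of any collision when drawing $s_{i+1} k_{i+1}$ samples with replacement from $[s_i]$ is at most $\binom{s_{i+1} k_{i+1}}{2}/s_i = O(s_{i+1}^2 k_{i+1}^2 / s_i)$, which vanishes by the hypothesis $s_i/s_{i+1}\to\infty$ (with the $k_j$ treated as fixed constants). Hence the total variation distance between the genetic algorithm's layer-$(i+1)$ population and an i.i.d.\ sample of size $s_{i+1}$ from $\mu_{i+1}$ is $o(1)$. Telescoping over the $L$ levels, the total variation gap between the final layer-$L$ outputs of the two algorithms is bounded by $\sum_{i=0}^{L-1} O(s_{i+1}^2 k_{i+1}^2 / s_i)$, which tends to zero.

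The main obstacle is pinning down a precise notion of ``approaches,'' since the two algorithms produce different numbers of total solutions and the observation statement is informal; I would commit to convergence in distribution (equivalently, total variation) of a single layer-$L$ output. A secondary technical subtlety is that the inductive hypothesis of exact i.i.d.\ layer-$i$ populations only holds in the limit, so rigorously one must propagate a running $\varepsilon$-total-variation bound through the induction and sum the layer-wise collision errors rather than conditioning on exact equality at each step. Under the natural reading that $k_1, \ldots, k_L$ are fixed constants, the birthday estimate above makes this quantitative and the telescoped bound vanishes under the stated limits.
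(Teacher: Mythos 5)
The paper states this observation without a proof, so there is no paper proof to compare against; I will just assess your argument on its own terms.

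Your inductive scheme is reasonable, but the quantitative step does not survive scrutiny under the \emph{stated} hypothesis. You bound the probability of any parent-index collision when producing layer $i$ by a birthday estimate of order $s_{i+1}^2 k_i^2 / s_i$ and then assert this vanishes because $s_i/s_{i+1}\to\infty$. That is false: $s_{i+1}^2/s_i = s_{i+1}\cdot(s_{i+1}/s_i)$ is a product of a factor tending to $\infty$ and a factor tending to $0$, and it need not tend to $0$. For example, $s_i = n^{3/2}$ and $s_{i+1}=n$ satisfy $s_i/s_{i+1}=n^{1/2}\to\infty$ while $s_{i+1}^2/s_i = n^{1/2}\to\infty$; even $s_i=n^2$, $s_{i+1}=n$ gives a constant. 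Your telescoped total-variation bound therefore does not go to zero under the hypothesis, and the proof as written requires the stronger assumption $s_{i+1}^2/s_i\to 0$. (There is also a minor off-by-one: each element of layer $S_i$ is built from $k_i$, not $k_{i+1}$, draws from $S_{i-1}$, which has $s_i$ elements; this does not change the substance of the critique.)

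The deeper issue is that you are proving much more than you need. For your own stated notion of ``approaches'' --- convergence in distribution of the single final output --- it suffices that the ancestry of that one output be collision-free. The number of ancestors of the final output at level $i$ is at most $\prod_{j>i} k_j$, a constant once $L$ and the $k_j$ are fixed; so at each level you draw only $O(1)$ indices from an urn of size $s_i$, and the chance of any repeat across the whole ancestry is $O\bigl(\sum_i 1/s_i\bigr)$, which vanishes as soon as every $s_i\to\infty$. This coupling-to-a-tree argument is tighter, does not need all of layer $i$ to be approximately i.i.d., and in fact does not even use the ratio condition $s_i/s_{i+1}\to\infty$. I suggest replacing the whole-layer birthday bound with this local ancestry argument; as written your proof has a genuine gap because the claimed error term does not vanish under the hypotheses given.
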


\begin{algorithm}[h]
\caption{\textsc{GeneticAlgorithm}$(\calA, L, (s_1, \ldots, s_L), (k_1, \ldots, k_L)$)} \label{alg:genetic}
$S_0 \gets \emptyset$ \\
\For{$j = 1, \ldots, s_1$}{
	$S_0 \gets S_0 \cup \{\calA(\emptyset)\}$
}
\For{$i = 1, \ldots, L - 1$}{
	$S_i \gets \emptyset$ \\
	\For{$j = 1, \ldots, s_{i + 1}$}{
		Let $C$ consist of $k_{i}$ i.i.d.~uniformly sampled items from $S_{i - 1}$ \\
		$S_i \gets S_i \cup \{\calA(C)\}$
	}
}
Let $C$ consist of $k_{L}$ i.i.d.~uniformly sampled items from $S_{L - 1}$ \\
\Return $\calA(C)$
\end{algorithm}

In the following, we show that any branching algorithm can be turned into a genetic algorithm given access to a slightly stronger oracle in the case where every solution is either wrong or correct (i.e., $\calV = \{0, 1\})$.

Recall the following standard Chernoff bound.
\begin{theorem}[Chernoff bound, see e.g. \cite{MitzenmacherUpfal05}] \label{thm:chernoff_lower}
	Suppose that $X_1, \ldots, X_n$ are i.i.d random variables taking values in $\{0, 1\}$. Let $X = \sum_{i = 1}^n X_i$ and $\mu = \E{X}$. Then
	\begin{align*}
		\Pr[X \leq (1 - \epsilon) \mu] \leq e^{-\epsilon^2 \mu/2}.
	\end{align*}
\end{theorem}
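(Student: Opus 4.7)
The plan is to apply the standard exponential-moment method of Chernoff, adapted to the lower tail by using a negative exponent. For any $t > 0$, the event $\{X \le (1-\epsilon)\mu\}$ coincides with $\{e^{-tX} \ge e^{-t(1-\epsilon)\mu}\}$, so Markov's inequality gives
\[
\Pr[X \le (1-\epsilon)\mu] \;\le\; e^{t(1-\epsilon)\mu}\,\E{e^{-tX}}.
\]
The goal is then to bound the moment generating function $\E{e^{-tX}}$ and optimize over $t$.

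First I would exploit independence to factor $\E{e^{-tX}} = \prod_{i=1}^n \E{e^{-tX_i}}$. For each Bernoulli $X_i$ with $p_i := \Pr[X_i = 1]$, a one-line calculation yields $\E{e^{-tX_i}} = 1 + p_i(e^{-t} - 1) \le \exp\bigl(p_i(e^{-t} - 1)\bigr)$ using $1 + x \le e^x$. Taking the product and using $\sum_i p_i = \mu$ gives $\E{e^{-tX}} \le \exp\bigl((e^{-t} - 1)\mu\bigr)$, so
\[
\Pr[X \le (1-\epsilon)\mu] \;\le\; \exp\Bigl((e^{-t} - 1)\mu + t(1-\epsilon)\mu\Bigr).
\]

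Next I would optimize the exponent in $t > 0$. Differentiating and setting to zero produces the canonical choice $t = -\log(1-\epsilon)$, which is positive for $\epsilon \in (0,1)$. Substituting and simplifying, the exponent becomes $-\epsilon\mu - (1-\epsilon)\log(1-\epsilon)\mu$, i.e.,
\[
\Pr[X \le (1-\epsilon)\mu] \;\le\; \left(\frac{e^{-\epsilon}}{(1-\epsilon)^{1-\epsilon}}\right)^{\mu}.
\]
This is already the sharp Chernoff lower-tail bound, but not yet in the clean form stated.

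The final step, and the only one needing any real care, is the elementary inequality $(1-\epsilon)(-\log(1-\epsilon)) \le \epsilon - \epsilon^2/2$ for $\epsilon \in [0,1)$. I would prove it by setting $h(\epsilon) = \epsilon - \epsilon^2/2 - (1-\epsilon)(-\log(1-\epsilon))$, observing $h(0) = 0$, and checking that $h'(\epsilon) = -\epsilon - \log(1-\epsilon) \ge 0$ via the standard inequality $-\log(1-x) \ge x$. Plugging this into the exponent collapses it to at most $-\epsilon^2 \mu / 2$, giving the claimed bound. The main obstacle is purely bookkeeping: keeping signs straight throughout the lower-tail optimization (since $\log(1-\epsilon) < 0$ flips inequalities), and using the correct direction of the Taylor-type inequality at the end. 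Everything else is textbook.
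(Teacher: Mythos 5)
The paper does not supply its own proof of this theorem; it is stated as a standard result and cited to Mitzenmacher--Upfal. Your argument is correct and is precisely the standard textbook derivation (Markov on $e^{-tX}$, factor the MGF by independence, use $1+x\le e^x$, optimize at $t=-\log(1-\epsilon)$, then the elementary inequality $-(1-\epsilon)\log(1-\epsilon)\le\epsilon-\epsilon^2/2$), so there is nothing to compare against and no gap to flag.
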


\begin{definition}
	We say oracle $\calA'(\cdot)$ $\alpha$-dominates oracle $\calA(\cdot)$ if the success probability of $\calA'(\cdot)$ is at least $\alpha$ times the success probability of oracle $\calA$ for some $\alpha$ for all inputs to $\mathcal{A}$.
\end{definition}

\begin{lemma}\label{lem:branching_to_genetic}
	Suppose an oracle $\calA(\cdot)$, a parameter $L$ and a vector $(k_1, \ldots, k_L)$ are given such that $\textsc{BranchingAlgorithm}(\calA, i, (k_1, \ldots, k_i))$ returns a solution $s$ that is correct with probability $p_i$ for each $1\leq i\leq L$. Then, given an oracle $\calA'$ that $\frac{1}{1-\epsilon}$ dominates the oracle $\calA$,  $\textsc{GeneticAlgorithm} (\calA', L, (s_1, \ldots, s_L), (k_1, \ldots, k_L)$ returns a solution $s'$ that is correct with probability at least $p_L$, provided that $s_i \geq -2\log(\epsilon/L)/(p_{i - 1}\epsilon^2)$ for each $1\leq i\leq L$.
\end{lemma}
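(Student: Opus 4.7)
The plan is to induct on the level $i$ with the invariant: \emph{with probability at least $1 - (i+1)\epsilon / L$, the population $S_i$ contains at least $(1 - \epsilon) p_i \cdot |S_i|$ correct solutions}. Applying this claim at level $L-1$ and combining with the final oracle call will produce the desired guarantee.

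For the base case $i = 0$, the population $S_0$ consists of $s_1$ independent draws of $\calA'(\emptyset)$, each correct with probability at least $\frac{1}{1-\epsilon} p_0 \ge p_0$ by dominance. Applying the Chernoff bound (\Cref{thm:chernoff_lower}) with $\mu \ge p_0 s_1$ and the assumed lower bound $s_1 \ge 2 \log(L/\epsilon)/(p_0 \epsilon^2)$ shows the fraction of correct solutions in $S_0$ is at least $(1 - \epsilon) p_0$ except with probability at most $e^{-\epsilon^2 p_0 s_1 / 2} \le \epsilon / L$.

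For the inductive step, I would condition on the good event at level $i - 1$, under which $S_{i-1}$ has correctness fraction $p' \ge (1 - \epsilon) p_{i-1}$. Each of the $s_{i+1}$ samples in $S_i$ is generated as $\calA'(C)$ for $C$ consisting of $k_i$ i.i.d.\ uniform draws from $S_{i-1}$, so conditional on $S_{i-1}$ the $s_{i+1}$ correctness indicators are independent. Comparing with the branching algorithm at the same level, whose context is $k_i$ i.i.d.\ Bernoulli($p_{i-1}$) and whose output is correct with probability $p_i$, the genetic context has i.i.d.\ Bernoulli($p'$) scores instead. Combining the $\frac{1}{1-\epsilon}$ boost from $\calA'$ with $p' \ge (1-\epsilon) p_{i-1}$, one argues that $\Pr[\calA'(C) \text{ correct} \mid S_{i-1}] \ge p_i$. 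A second Chernoff bound on the $s_{i+1}$ conditionally independent samples in $S_i$ yields concentration of the correctness fraction around its mean, which matches the inductive invariant at level $i$ provided $s_{i+1}$ satisfies the hypothesis. Union-bounding the failure events across the $L$ inductive steps gives the final bound.

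The main obstacle is the pointwise step showing $\Pr[\calA'(C) \text{ correct} \mid S_{i-1}] \ge p_i$: the factor-$(1-\epsilon)$ degradation in the context's Bernoulli parameter must be offset by the factor-$\frac{1}{1-\epsilon}$ boost from $\calA'$'s dominance. This requires carefully exploiting the structure of the oracle's success probability as a function of the multiset of scores in the context (in particular, the fact that the branching algorithm's output probability $p_i$ is itself an average over Bernoulli$(p_{i-1})$ contexts, so that the perturbation to Bernoulli$(p')$ changes this average by a factor related to $1-\epsilon$), so that the two perturbations cancel cleanly and the genetic algorithm recovers the branching algorithm's guarantee level-by-level.
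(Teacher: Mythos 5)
Your overall architecture — induct on levels, maintain a high-probability lower bound on the fraction of correct solutions in $S_i$, apply the Chernoff bound at each step, and union bound — is the same as the paper's. But the invariant you chose creates a genuine gap that does not close, and you identify it yourself without resolving it.

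The problem is the $(1-\epsilon)$ factor in your invariant ``$S_i$ has correctness fraction at least $(1-\epsilon)p_i$.'' In the inductive step, the context $C$ is $k_i$ i.i.d.\ Bernoulli$((1-\epsilon)p_{i-1})$ rather than Bernoulli$(p_{i-1})$. You then want to claim that the $\frac{1}{1-\epsilon}$ output boost from $\calA'$ cancels the $(1-\epsilon)$ degradation of the context. This does not work: the oracle's success probability is a \emph{nonlinear} function of the multiset of context scores, so lowering the context's Bernoulli parameter by a factor $(1-\epsilon)$ can lower the success probability of $\calA(C)$ by much more than a factor $(1-\epsilon)$ (e.g., when $p_{i-1}$ is tiny and $k_i$ is moderate, $\Pr[\exists\text{ a correct item in }C]$ drops by roughly a factor $(1-\epsilon)$ already, \emph{before} accounting for the nonlinearity of $F$). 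There is no general reason the two perturbations ``cancel cleanly,'' and your proposal offers no argument for the key inequality $\Pr[\calA'(C)\text{ correct}\mid S_{i-1}]\ge p_i$, only a hope that it holds.

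The fix is to strengthen the invariant to \emph{$S_i$ is a $p_i$-correct answer population} (no $(1-\epsilon)$ factor), which is what the paper does. The $\frac{1}{1-\epsilon}$-domination of $\calA'$ over $\calA$ is precisely the slack that is spent on the Chernoff bound: each draw into $S_i$ is correct with probability at least $\frac{1}{1-\epsilon}p_i$ (because the context is drawn from a population that is at least $p_{i-1}$-correct, matching exactly the reference distribution underlying the branching guarantee $p_i$), so having the empirical fraction drop by a factor $(1-\epsilon)$ still leaves you above $p_i$. With this invariant the ``pointwise step'' is trivial — the context quality exactly matches or dominates that of the branching algorithm, so by (weak) monotonicity the conditional success probability is at least $\frac{1}{1-\epsilon}p_i$ — and the Chernoff and union bound steps go through as you described, with $s_i \geq -2\log(\epsilon/L)/(p_{i-1}\epsilon^2)$ giving a failure probability of at most $\epsilon/L$ per level as required.
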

\begin{proof}
	We prove by induction that for each $0\leq i\leq L$, the set $S_i$ is a $p_{i - 1}$-correct answer population with probability at least $1 - i \cdot \epsilon/L$.

	For $S_0$, we sample directly from an oracle which guarantees that the solutions are correct with probability $\frac{1}{1 - \epsilon} p_0$. Therefore, $s_1 = -2\log(\epsilon/L)/(p_{0}\epsilon^2)$ samples suffices with probability $1 - \epsilon/L$ by \Cref{thm:chernoff_lower}. This concludes the base case.

	Then, in the step case, we have that if $S_{i - 1}$ is a $p_{i - 2}$-correct answer population with probability  $1 - (i - 1)\epsilon/L$ by induction. We assume that it is in fact a $p_{i - 2}$-correct answer population, and then obtain that $S_{i}$ is a $p_{i - 1}$-correct answer population with probability $1 - \epsilon/L$ by \Cref{thm:chernoff_lower}. The step case follows by union bound.

	From this, we obtain that $S_L$ is a $p_{L - 1}$-correct answer population with probability $1 - \epsilon$. If it is, then the output is correct with probability $\frac{1}{1 - \epsilon}p_L$. The lemma follows.
\end{proof}

\paragraph{Random Sampling Algorithm.}
The genetic algorithm samples solutions from the previous layer only. An alternative is to sample solutions from all previously generated solutions. This is the idea behind the random sampling algorithm, whose pseudo-code is given in \Cref{alg:random_sampling}.
\begin{algorithm}[h]
\caption{\textsc{RandomSamplingAlgorithm}$(\calA, n, k$)} \label{alg:random_sampling}
$s_1 \gets \calA(\emptyset)$ \\
\For{$j = 2, \ldots, n$}{
	Let $C$ consist of $k$ i.i.d.~uniformly sampled items from $\{s_1, \ldots, s_{j-1}\}$ \\
	$s_j \gets \calA(C)$
}
\Return $s_n$
\end{algorithm}

We will see in \cref{sec:succ_prob} that this algorithm also achieves optimal success probability for decaying models.

\section{Optimal Success Probability} \label{sec:succ_prob}
In this section, we study the optimal success probability achievable by reasoning algorithms (without restrictions on the number of calls). We show that for decaying models, the algorithms introduced in \Cref{sec:algorithm} achieve the optimal success probability as the number of calls goes to $\infty$.

\subsection{Monotonicity}
In this section we establish some general properties of reasoning oracles and algorithms.
We study two versions of monotonicity, and show that they have interesting consequences to the structure of an optimal algorithm.

\begin{definition}[Monotonicity] \label{defn:monotone}
  An oracle $\calA$ is called
  \begin{itemize}
    \item \emph{weakly monotone} if
  for any two multisets $A=\{a_1,\ldots,a_k\}$, $B=\{b_1,\ldots,b_k\}$ satisfying $a_i\le b_i$ for all $i\in[k]$, we have $\calF(A)\preceq \calF(B)$, where $\preceq$ denotes (first order) stochastic dominance (i.e., there is a monotone coupling between $\calF(A)$ and $\calF(B)$);
    \item \emph{strongly monotone} if it is weakly monotone and for any two multisets $A$ and $B$ satisfying $A\subseteq B$ (as multisets), we have $\calF(A)\preceq \calF(B)$.
  \end{itemize}
\end{definition}

The following result shows that weak monotonicity implies an FKG inequality for the joint distribution of scores of solutions \cite{FKG}.

\begin{lemma} \label{lem:monotone-weak}
  If an oracle $\calA$ is weakly monotone,
  then for any $(\calA,n)$-reasoning algorithm $(S_1,\ldots,S_n)$ and any functions $f,g:\calV^n\to \R$ that are both non-decreasing (or both non-increasing), we have
  \begin{align*}
    \E{fg} \ge \E f\E g.
  \end{align*}
\end{lemma}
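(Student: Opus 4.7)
The plan is to prove the inequality by induction on $n$, adapting the classical Harris/FKG argument to the non-product joint distribution of scores produced by a reasoning algorithm $(S_1,\ldots,S_n)$. Let $X_k := \score(s_k) \in \calV$ for $k \in [n]$. By the definition of the oracle, the conditional distribution of $X_k$ given $(X_1,\ldots,X_{k-1})$ depends only on the multiset $\{X_i : i \in S_k\}$; denote this conditional law by $\mu_k(X_1,\ldots,X_{k-1})$. Weak monotonicity of $\calA$ then yields the key structural fact that $\mu_k(\vec x) \preceq \mu_k(\vec x')$ whenever $\vec x \le \vec x'$ coordinatewise.

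For the base case $n = 1$, I would use the standard trick: for any two non-decreasing $f,g:\calV\to\R$ and any $x,y\in\calV$, we have $(f(x)-f(y))(g(x)-g(y)) \ge 0$. Taking $X,Y$ independent with law $\calF(\emptyset)$ and expectations then gives $2\E{fg} - 2\E{f}\E{g} \ge 0$, and the non-increasing case is identical.

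For the inductive step, I would condition on $(X_1,\ldots,X_{n-1})$. With these frozen, $f$ and $g$ become non-decreasing functions (in the same direction) of the single variable $X_n$, so the base case applied to the conditional law $\mu_n(X_1,\ldots,X_{n-1})$ gives
\begin{align*}
\E{fg \mid X_1,\ldots,X_{n-1}} \ge \E{f \mid X_1,\ldots,X_{n-1}} \cdot \E{g \mid X_1,\ldots,X_{n-1}}.
\end{align*}
Define $F(x_1,\ldots,x_{n-1}) := \expct{X_n\sim \mu_n(\vec x)}{f(\vec x, X_n)}$ and $G$ analogously. Taking total expectations, the problem reduces to establishing $\E{FG} \ge \E{F}\E{G}$, which will follow from the induction hypothesis applied to the sub-algorithm $(S_1,\ldots,S_{n-1})$---provided that $F$ and $G$ are themselves non-decreasing (in the same direction) in $(x_1,\ldots,x_{n-1})$.

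Verifying the monotonicity of $F$ (and symmetrically $G$) is the main obstacle, and this is precisely where weak monotonicity of $\calA$ enters. For $\vec x \le \vec x'$, monotonicity of $f$ in its first $n-1$ arguments gives $\expct{\mu_n(\vec x)}{f(\vec x, X_n)} \le \expct{\mu_n(\vec x)}{f(\vec x', X_n)}$, while stochastic dominance $\mu_n(\vec x) \preceq \mu_n(\vec x')$ together with non-decreasing-ness of $f(\vec x',\cdot)$ in the last coordinate gives $\expct{\mu_n(\vec x)}{f(\vec x', X_n)} \le \expct{\mu_n(\vec x')}{f(\vec x', X_n)}$. Chaining yields $F(\vec x) \le F(\vec x')$; the non-increasing case is symmetric. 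This closes the induction.
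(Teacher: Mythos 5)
Your proof is correct, and it takes a genuinely different route from the paper's. The paper constructs an order-preserving coupling of the dependent scores $(v_1,\ldots,v_n)$ with i.i.d.\ $\Unif([0,1])$ variables (possible because weak monotonicity makes each conditional quantile map monotone), lifts $f,g$ to non-decreasing functions on $[0,1]^n$, and invokes the product-measure FKG/Harris inequality as a black box. You instead prove the correlation inequality directly by induction on $n$: Chebyshev's sum inequality for two i.i.d.\ draws in the base case, then conditioning on $(X_1,\ldots,X_{n-1})$ in the step, applying the base case to the conditional law $\mu_n(\vec x)$, and passing to the conditional expectations $F,G$, whose monotonicity you verify by chaining pointwise monotonicity of $f$ with stochastic dominance $\mu_n(\vec x)\preceq\mu_n(\vec x')$. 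The chaining argument for $F$'s monotonicity is exactly where weak monotonicity enters, and you identify this correctly. The paper's route is shorter once FKG is assumed; yours is self-contained and makes visible which hypothesis is used where, at the cost of explicitly managing conditional expectations. Both are standard paths to a Harris-type inequality, and your step that the marginal of $(X_1,\ldots,X_{n-1})$ coincides with the law generated by the truncated algorithm $(S_1,\ldots,S_{n-1})$ (needed to invoke the inductive hypothesis) holds because $S_k\subseteq[k-1]$ for every $k$.
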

\begin{proof}
  Let $v_i = \score(s_i)$ for $i\in [n]$.
  Let $(X_1,\ldots,X_n)$ be i.i.d. samples from $\Unif([0, 1])$.
  For every $j \in [n]$ and every possible combination of values $(v_i: i\in S_j)$, we can couple $\Unif([0, 1])$ with $P_{v_j | (v_i: i\in S_j)}$ in an order-preserving way.
  In this way, we can construct an order-preserving coupling between $(v_1,\ldots,v_n)$ and $(X_1,\ldots,X_n)$.
  So $f,g$ can be lifted to non-decreasing (or non-increasing) functions $\tilde{f},\tilde{g}:[0,1]^n\to \R$ such that $f(v_1,\ldots,v_n) = \tilde{f}(X_1,\ldots,X_n)$ and $g(v_1,\ldots,v_n) = \tilde{g}(X_1,\ldots,X_n)$.
  The FKG inequality \cite{FKG} implies that
  \begin{align*}
    \E{fg} = \E{\tilde{f}\tilde{g}} \ge \E{\tilde{f}}\E{\tilde{g}} = \E f\E g.
  \end{align*}
\end{proof}

Strong monotonicity, in addition to the above, implies that an optimal algorithm must use all available context size.
\begin{lemma}
\label{lem:monotone-strong}
  If an oracle $\calA$ is strongly monotone,
  then for any $(\calA,n)$-reasoning algorithm, adding more solutions to the context of any oracle call never decreases the expected score of the final output.
\end{lemma}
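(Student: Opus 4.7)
The plan is to build an explicit coupling between the original reasoning algorithm and the modified one (where extra solutions have been added to the context of one oracle call) such that, under this coupling, the score of each intermediate solution in the modified run is at least the score of the corresponding solution in the original run. Stochastic dominance of the final output then follows immediately and yields the claimed inequality in expectation.

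Concretely, I fix an algorithm $(S_1,\ldots,S_n)$ and an index $j\in[n]$, and consider the modified algorithm $(S_1,\ldots,S_{j-1},S_j',S_{j+1},\ldots,S_n)$ with $S_j\subseteq S_j'$. Let $s_i^{(1)}$ and $s_i^{(2)}$ denote the $i$-th output in the original and modified runs respectively. I construct the coupling by induction on $i\in[n]$, maintaining the invariant $\score(s_k^{(1)})\le \score(s_k^{(2)})$ for all $k\le i$. For $i<j$, I couple the $i$-th call to produce identical outputs, which is possible since the context multisets coincide and hence the two output distributions agree. At $i=j$, the context multiset in the modified run contains (as a multiset) the one in the original run, so strong monotonicity (\Cref{defn:monotone}) provides a monotone coupling of the outputs achieving $\score(s_j^{(1)})\le\score(s_j^{(2)})$.

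For the step $i>j$, by the inductive hypothesis the multisets of context scores $A=\{\score(s_\ell^{(1)}):\ell\in S_i\}$ and $B=\{\score(s_\ell^{(2)}):\ell\in S_i\}$ are componentwise comparable, so weak monotonicity gives $\calF(A)\preceq\calF(B)$ and a monotone coupling of call $i$ extends the construction while preserving the invariant. Iterating to $i=n$ yields $\score(s_n^{(1)})\le\score(s_n^{(2)})$ almost surely under the coupling, and taking expectations gives $\E{\score(s_n^{(1)})}\le \E{\score(s_n^{(2)})}$, which is the statement of the lemma.

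The main subtlety I expect is the bookkeeping of the coupling across oracle calls, specifically that at step $i$ we may ``re-use'' the coupled scores from earlier calls as the context feeding a fresh monotone coupling of the current oracle's output distribution. This is harmless once one observes that the conditional distribution of each oracle output depends only on the multiset of context scores, so the coupling can be built call by call in topological order without needing any global joint distribution argument. One conceptual point worth flagging: the step $i=j$ uses strong monotonicity (multiset containment), while the propagation to later calls only requires weak monotonicity (componentwise comparison); the hypothesis of strong monotonicity supplies both.
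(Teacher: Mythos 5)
Your proof is correct and essentially the same as the paper's: the paper's own (very terse) argument is ``construct a monotone coupling step by step between the distributions of $(\score(s_1),\ldots,\score(s_n))$ and $(\score(s'_1),\ldots,\score(s'_n))$,'' and your write-up simply fills in the details of that call-by-call coupling construction, using strong monotonicity where the contexts differ by containment and weak monotonicity to propagate the componentwise ordering forward. The only cosmetic difference is that the paper phrases it for arbitrary $S_i\subseteq S'_i$ simultaneously while you modify a single call and implicitly iterate; both are fine.
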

\begin{proof}
  For two $(\calA,n)$-reasoning algorithms operating on contexts $(S_1,\ldots,S_n)$ and $(S'_1,\ldots,S'_n)$ respectively satisfying $S_i\subseteq S'_i$, we can construct a monotone coupling step by step between the distributions of $(\score(s_1),\ldots,\score(s_n))$ and $(\score(s'_1),\ldots,\score(s'_n))$.
  So adding more solutions to the context never hurts.
\end{proof}

In particular, if there are no limits on context size, then the optimal algorithm is to take $S_i$ to contain all previously generated solutions for all $i$.

For the binary quality model considered in this article (i.e., $\calV=\{0,1\}$), the transfer function $\calF$ can be compactly represented by a function $F: \N\times \N\to [0,1]$, where $F(a,b)$ is the probability that the oracle outputs a correct solution when the context contains $a$ correct solutions and $b$ incorrect solutions. With minor abuse of notation, we call $F$ the transfer function as well.

A binary quality model is weakly monotone if and only if $F$ satisfies $F(a+1,b)\ge F(a,b+1)$ for all $a,b\ge 0$.
A binary quality model is strongly monotone if and only if $F$ satisfies $F(a+1,b)\ge F(a,b+1)\ge F(a,b)$ for all $a,b\ge 0$.
So we have the following corollary.

\begin{corollary}[Monotonicity for binary quality models] \label{cor:monotone_binary}
  Consider a binary quality model with transfer function $F$.

  If $F(a+1,b)\ge F(a,b+1)$ for all $a,b\ge 0$, then for any $(\calA,n)$-reasoning algorithm $(S_1,\ldots,S_n)$ and any functions $f,g:\{0,1\}^n\to \R$ that are both non-decreasing (or both non-increasing), we have $\E{fg} \ge \E f\E g$.

  If $F(a+1,b)\ge F(a,b+1) \ge F(a,b)$ for all $a,b\ge 0$, then for any $(\calA,n)$-reasoning algorithm, adding more solutions to the context of any oracle call never decreases the expected score of the final output.
\end{corollary}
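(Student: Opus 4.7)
The plan is to show that in the binary setting ($\calV=\{0,1\}$), the pointwise conditions on $F$ stated in the corollary are exactly the weak and strong monotonicity conditions from \Cref{defn:monotone}, at which point \Cref{lem:monotone-weak} and \Cref{lem:monotone-strong} apply directly to yield the two conclusions.

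For the first conclusion, I would verify that $F(a+1,b)\ge F(a,b+1)$ for all $a,b\ge 0$ implies weak monotonicity. Take multisets $A=\{a_1,\ldots,a_k\}$ and $B=\{b_1,\ldots,b_k\}$ sorted in increasing order with $a_i\le b_i$, and let $\alpha$ and $\beta$ denote the number of $1$'s in $A$ and $B$ respectively; since $b_i\ge a_i=1$ forces $b_i=1$ in the binary setting, $\alpha\le\beta$. Now $\calF(A)$ and $\calF(B)$ are Bernoulli with means $F(\alpha,k-\alpha)$ and $F(\beta,k-\beta)$, and telescoping from $(\alpha,k-\alpha)$ to $(\beta,k-\beta)$ via single-step transitions $(a,b+1)\to (a+1,b)$ yields $F(\alpha,k-\alpha)\le F(\beta,k-\beta)$ by the hypothesis at each step. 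Hence $\calF(A)\preceq \calF(B)$, so the oracle is weakly monotone and \Cref{lem:monotone-weak} gives the FKG-type inequality.

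For the second conclusion I would additionally verify strong monotonicity, which amounts to $\calF(A)\preceq \calF(B)$ whenever $A\subseteq B$ as multisets. By induction on the number of elements added, it suffices to show $F(a,b)\le F(a+1,b)$ (adding a $1$) and $F(a,b)\le F(a,b+1)$ (adding a $0$). The latter is the extra hypothesis, and the former follows by chaining $F(a,b)\le F(a,b+1)\le F(a+1,b)$ using both parts of the hypothesis. Applying \Cref{lem:monotone-strong} then completes the argument. Since this is essentially a dictionary translation between multiset monotonicity and pointwise inequalities on $F$, no step presents a serious obstacle; the only subtlety is observing that pointwise-sorted comparison of binary multisets reduces cleanly to comparison of their numbers of $1$'s, which is immediate.
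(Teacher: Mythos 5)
Your argument is correct and follows the paper's own route: the paper proves this corollary by noting (in the running text immediately preceding it) that the stated inequalities on $F$ are equivalent to weak and strong monotonicity in the binary setting, and then citing \Cref{lem:monotone-weak} and \Cref{lem:monotone-strong}. You have simply written out the straightforward equivalence verification (sorted binary multisets reduce to counting $1$'s, telescoping along $(a,b+1)\to(a+1,b)$, and the two single-element additions for the subset condition) that the paper treats as immediate.
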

\begin{proof}
  Directly follows from \Cref{lem:monotone-weak} and \Cref{lem:monotone-strong}.
\end{proof}

\subsection{Optimality of the branching algorithm}
In the following, we show that for decaying models, the branching algorithm achieves the maximum achievable success probability.

The following lemma discusses solutions to a class of polynomial equations which arise in this study.
\begin{lemma} \label{lem:poly_eqn}
  Let $p,q\in [0, 1]$ and $k\in \N$.
  Consider the equation $x = q - (1 - x)^k (q - p)$ and its solutions in $[0, 1]$.
  \begin{itemize}
    \item When $k=0$, the equation has a unique solution $x=p$.
    \item When $k=1$ and $(p,q)\ne (0,1)$, the equation has a unique solution $x=\frac{p}{1-q+p}$. If $k=1$ and $(p,q)=(0,1)$, every $x\in [0, 1]$ is a solution.
    \item When $k\ge 2$ and $p\ge q$, the equation has a unique solution in $[0, 1]$.
    When $k\ge 2$ and $0<p<q$, the equation has a unique solution in $[0, 1]$.
    When $k\ge 2$, $p=0$, and $k q\le 1$, the equation has a unique solution $0$ in $[0, 1]$.
    When $k\ge 2$, $p=0$, and $k q > 1$, the equation has two solutions in $[0, 1]$, one of which is $0$ and the other is in $(0, 1]$.
  \end{itemize}
\end{lemma}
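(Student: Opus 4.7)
I would introduce $\phi(x) = x - q + (1-x)^k(q-p)$ so that solutions in $[0,1]$ of the stated equation are exactly the zeros of $\phi$ in $[0,1]$. The boundary values
\[
  \phi(0) = -p \le 0, \qquad \phi(1) = 1 - q \ge 0,
\]
immediately give at least one zero in $[0,1]$ by the intermediate value theorem, so the substance of the lemma is an upper bound on the number of zeros.

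The small cases $k = 0$ and $k = 1$ reduce to linear equations that I would dispatch by direct algebra. For $k = 0$ we get $\phi(x) = x - p$, with unique root $p$. For $k = 1$ we get $\phi(x) = (1 - q + p)x - p$, and the leading coefficient vanishes exactly when $(p,q) = (0,1)$; this yields the unique root $p/(1-q+p)$ in the non-degenerate regime and the all-of-$[0,1]$ conclusion in the degenerate one.

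For $k \ge 2$ I would compute
\[
  \phi'(x) = 1 - k(1-x)^{k-1}(q-p), \qquad \phi''(x) = k(k-1)(1-x)^{k-2}(q-p),
\]
both of whose signs are controlled by $q - p$, and split into cases accordingly. If $p \ge q$, then $\phi'(x) \ge 1$ on $[0,1]$, so $\phi$ is strictly increasing and the guaranteed zero is unique. If $q > p > 0$, then $\phi''(x) > 0$ on $[0,1)$, so $\phi$ is strictly convex. A strictly convex function has at most two real roots, and between them it is strictly negative; since $\phi(0) = -p < 0$, the smaller root must lie in $(-\infty, 0)$, so at most one root lies in $[0,1]$, and combined with the IVT we get exactly one.

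The main obstacle is the edge case $p = 0$, where $\phi(0) = 0$ makes $0$ a built-in root and the convexity argument above no longer rules out a second root; here I need the finer information $\phi'(0) = 1 - kq$. When $kq \le 1$, strict convexity on $(0,1)$ makes $\phi'$ strictly increasing from a non-negative starting value, so $\phi$ is strictly increasing on $(0,1]$ and $0$ is the only zero. When $kq > 1$, $\phi$ dips strictly below $0$ just to the right of $0$, and since $\phi(1) = 1 - q \ge 0$ it must cross back up to produce a second zero in $(0,1]$; strict convexity then caps the total count at two, giving exactly $\{0, x_*\}$ with $x_* \in (0,1]$, as claimed.
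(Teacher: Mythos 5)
Your proof is correct and follows the same route as the paper: define the auxiliary function $\phi$ (the paper calls it $f$), use the endpoint signs $\phi(0)=-p\le 0$, $\phi(1)=1-q\ge 0$, dispatch $k=0,1$ by algebra, and for $k\ge 2$ split on the sign of $q-p$, using strict monotonicity when $p\ge q$ and strict convexity (with the further subcase $p=0$ governed by $\phi'(0)=1-kq$) when $p<q$. You merely spell out the ``convex implies at most two roots, and $\phi(0)<0$ forces the smaller one below $0$'' step that the paper leaves implicit.
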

\begin{proof}
  Let $f(x) = x - q + (1 - x)^k (q - p)$.
  Then $f(0) = -p \le 0$, and $f(1) = 1 - q \ge 0$.
  The $k=0,1$ cases follow directly. We therefore assume $k \ge 2$.

  Suppose $p\ge q$. Then $f$ is strictly increasing in $[0, 1]$, and there is a unique root in $[0, 1]$.

  Suppose $p<q$. Then $f$ is strictly convex in $[0, 1]$. When $p>0$, there is a unique root in $[0, 1]$.
  When $p=0$, $0$ is a root, and $f'(0)=1-k q$.
  So when $k q\le 1$, there is no root in $(0, 1]$; when $k q > 1$, there is a unique root in $(0, 1]$.
\end{proof}

A decaying model is a binary quality model whose transfer function $F$ satisfies $F(a+1,b)=F(a,b+1)$ for all $a\ge 1, b\ge 0$. We have $f(k) = F(k,0)$ and $g(k) = F(0,k)$ for all $k\in \N$.

\begin{proposition}[Optimality of branching algorithm] \label{prop:binary_opt}
  Consider a decaying model.

  Let $x_k^*$ be the largest solution in $[0, 1]$ to the equation $x=f(k)-(1-x)^k(f(k)-g(k))$ (c.f.~\cref{lem:poly_eqn}).

  Then the success probability of any $(\calA,n)$-reasoning algorithm is at most $x^*=\sup_{k\ge 0} x_k^*$.

  Furthermore, if $g(0)>0$, then for any $\epsilon>0$, a branching algorithm (\cref{alg:branching}) achieves success probability at least $x^*-\epsilon$.
\end{proposition}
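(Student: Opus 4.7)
The plan is to prove the upper and lower bounds separately.

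For the upper bound, I fix any $(\calA, n)$-reasoning algorithm with contexts $(S_1, \ldots, S_n)$, set $p_i = \Pr[\score(s_i) = 1]$, and let $p^* = \max_i p_i$ be attained at some index $i^*$ with $k := |S_{i^*}|$. The decaying model is weakly monotone, since $F(a+1,b) = F(a,b+1)$ for $a \ge 1$ and $F(1,b) = f(b+1) \ge g(b+1) = F(0,b+1)$. I would therefore invoke \cref{cor:monotone_binary}, applied inductively to the non-negative, non-increasing indicators $1 - \score(s_j)$ (whose products remain non-increasing), to get
\[
  \Pr[\forall j \in S_{i^*}:\ \score(s_j) = 0] \ge \prod_{j \in S_{i^*}}(1 - p_j) \ge (1 - p^*)^k.
\]
Expanding the definition of the decaying model at step $i^*$, this yields $p^* \le f(k) - (1 - p^*)^k(f(k) - g(k))$, i.e., $h_k(p^*) \le 0$ for $h_k(x) := x - f(k) + (1-x)^k(f(k) - g(k))$. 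Since $h_k(1) = 1 - f(k) \ge 0$, the intermediate value theorem produces a root of $h_k$ in $[p^*, 1]$, so $p^* \le x_k^* \le x^*$.

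For the lower bound, I would analyze a branching algorithm with all context sizes fixed to some integer $k$ and depth $L$. Writing $q_L$ for the success probability of a level-$L$ solution, independence of sibling subtrees gives the recursion
\[
  q_L = f(k) - (f(k) - g(k))(1 - q_{L-1})^k =: F_k(q_{L-1}), \qquad q_0 = g(0).
\]
The fixed points of $F_k$ in $[0, 1]$ are exactly the solutions of the equation in the statement, and the largest is $x_k^*$. Since $(1-x)^k$ is convex on $[0, 1]$, $F_k$ is non-decreasing and concave, and consequently $F_k - \mathrm{id}$ is concave with $F_k(0) = g(k) \ge 0$ and $F_k(1) - 1 = f(k) - 1 \le 0$. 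Combined with the case analysis in \cref{lem:poly_eqn}, this makes $x_k^*$ an attracting fixed point on $(0, 1]$, with the iterates converging monotonically from either side. The hypothesis $g(0) > 0$ ensures $q_0 > 0$, so $q_L \to x_k^*$ as $L \to \infty$. Given $\epsilon > 0$, I would first pick $k$ with $x_k^* \ge x^* - \epsilon/2$ (using $x^* = \sup_k x_k^*$) and then $L$ large enough that $q_L \ge x_k^* - \epsilon/2$, for a total success probability of at least $x^* - \epsilon$.

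The most delicate step will be the convergence analysis, especially the edge case $g(k) = 0$ with $k f(k) > 1$: then $F_k$ has two fixed points in $[0, 1]$, namely $0$ and $x_k^* > 0$, so starting the iteration at $0$ would stay stuck. The hypothesis $g(0) > 0$ is precisely what places $q_0$ strictly above the spurious fixed point, after which concavity of $F_k - \mathrm{id}$ together with its sign pattern at the endpoints force monotone convergence to $x_k^*$.
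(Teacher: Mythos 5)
Your proposal is correct and its key ingredients coincide with the paper's: the upper bound rests on weak monotonicity (the FKG-type product bound from \cref{cor:monotone_binary}) combined with the fixed-point characterization from \cref{lem:poly_eqn}, and the lower bound analyzes a fixed-fan-in branching cascade whose level-to-level recursion converges to $x_k^*$. The one real structural difference is in how you organize the upper bound: the paper runs a forward induction on call index $m$, showing $\Pr[v_m = 1] \le x^*$ by using $h_{k_m}(x^*) \ge 0$ at each step, whereas you instead extract the extremal index $i^*$ achieving $p^* = \max_i p_i$, derive $h_k(p^*) \le 0$ there, and invoke IVT together with $h_k(1) \ge 0$ to conclude $p^* \le x_k^*$. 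Your extremal formulation is a bit cleaner in that it does not require knowing the target $x^*$ in advance; the paper's induction is more direct but otherwise the same argument. For the lower bound, your treatment is more careful than the paper's (which asserts monotone convergence without discussing the case $g(k)=0$ with two fixed points, or the possibility $q_0 > x_k^*$): your observations that $F_k - \mathrm{id}$ is concave, that $g(0)>0$ places $q_0$ strictly above the spurious fixed point at $0$, and that convergence may also be monotone from above are all correct and fill in details the paper glosses over.
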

\begin{proof}
  We prove lower and upper bounds separately.

  \underline{Upper bound.}
  Let $(S_1,\ldots,S_n)$ be an $(\calA,n)$-reasoning algorithm.
  Let $v_i = \score(s_i)$ for $i\in [n]$.
  We prove by induction that $\Pr[v_i=1] \le x^*$ for all $i\in [n]$.

  Fix any $m\in [n]$. Let $k_m = |S_m|$.
  By the induction hypothesis, for each $i\in S_m$, we have $\Pr[v_i=0] \ge 1-x^*$.
  By \cref{cor:monotone_binary}, $\Pr[v_i=0\forall j\in S_i] \ge (1-x^*)^{k_m}$.

  Then $\Pr[v_i] \le f(k_m) - (1 - x^*)^{k_m} (f(k_m) - g(k_m)) \le x^*$,
  where last step follows from the definition of $x^*$.

  \underline{Lower bound.}
  Choose $k$ such that $x_k^* > x^* - \epsilon$.
  Consider the branching algorithm \Cref{alg:branching} with fan-in $k_i = k$ for all $i \in 1, \ldots, L$. Since all the generated solutions at a particular level are independent, the success probability evolves according to the function whose fixed point we are searching. Then, it is easy to observe that the success probability monotonically increases across levels, and goes to $x_k^*$ in the limit.
\end{proof}

In fact, the branching algorithm achieves optimal success probability not only in the limit, but also among algorithms with a fixed depth.

\begin{proposition}[Optimality of branching algorithm for fixed-depth algorithms] \label{prop:binary_opt_fixed_depth}
  Work under the setting of \cref{prop:binary_opt}.
  Among algorithms with a fixed depth $L$, a branching algorithm with $L$ levels achieves the maximum success probability.
\end{proposition}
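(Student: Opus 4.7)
The plan is to prove the statement by induction on the depth $\ell$, showing that for any reasoning algorithm and any node $v$ at depth $\ell$ in its DAG, the success probability of $v$ is upper bounded by the corresponding value $y_\ell$ achievable by branching, where we define $y_0 := g(0)$ and $y_\ell := \sup_{k \ge 0}\bigl[f(k) - (1-y_{\ell-1})^k (f(k) - g(k))\bigr]$ for $\ell \ge 1$. Thus the supremum of success probabilities over all depth-$L$ algorithms equals $y_L$, which is also the supremum over all depth-$L$ branching algorithms. For the base case, a depth-$0$ node must have empty context and thus succeeds with probability exactly $g(0) = y_0$.

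For the inductive step, consider any depth-$\ell$ node $v$ with context $S_v$ of size $k$. Every node $i \in S_v$ has depth at most $\ell - 1$, so by the inductive hypothesis $\Pr[\score(s_i) = 1] \le y_{\ell-1}$ for every $i \in S_v$. The key step is to lower bound $\Pr[\forall i \in S_v: \score(s_i) = 0]$ by $(1 - y_{\ell-1})^k$. The indicators $1 - \mathbbm{1}[\score(s_i) = 1]$ are each non-negative and non-increasing in the joint score vector, and a product of non-negative non-increasing functions is again non-negative and non-increasing, so we can iterate \Cref{cor:monotone_binary} by peeling off one factor at a time to obtain
\begin{align*}
\Pr[\forall i \in S_v: \score(s_i) = 0] \ge \prod_{i \in S_v} \Pr[\score(s_i) = 0] \ge (1 - y_{\ell-1})^k.
\end{align*}
Combined with the defining property of the decaying model, this gives $\Pr[\score(s_v) = 1] \le f(k) - (1 - y_{\ell-1})^k(f(k) - g(k)) \le y_\ell$, completing the induction.

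For the matching lower bound, the branching algorithm with $L$ levels and fan-in at level $\ell$ chosen (up to $\epsilon$) to nearly maximize the recurrence produces independent subtrees, so the success probability at each level equals the recurrence exactly and, by choosing fan-ins approaching the suprema, attains success probability arbitrarily close to $y_L$. This mirrors the lower-bound argument of \cref{prop:binary_opt} but is now restricted to a fixed number of iterations rather than taken to the limit.

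The main obstacle will be the careful extension of the two-function FKG inequality stated in \Cref{cor:monotone_binary} to a product of $k$ indicator functions. While this is a routine induction, one must verify that the intermediate products remain non-negative and non-increasing as functions of the joint score vector so that the pairwise inequality can be reapplied at each peel-off step; this is where strong monotonicity is used implicitly, since for decaying models with $f \ge g$ the weak monotonicity condition $F(a+1,b) \ge F(a,b+1)$ of \Cref{cor:monotone_binary} is satisfied. Beyond this, the argument essentially reuses the recurrence analysis from the proof of \cref{prop:binary_opt}.
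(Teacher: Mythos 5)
Your proposal is correct and follows essentially the same route as the paper: the paper defines $a_0=g(0)$ and $a_i = \max_k\bigl(f(k) - (1-a_{i-1})^k(f(k)-g(k))\bigr)$, notes that a depth-$L$ branching algorithm attains $a_L$, and invokes ``the same induction as in the proof of \cref{prop:binary_opt}'' to bound any depth-$L$ algorithm by $a_L$; your $y_\ell$ recurrence, depth induction, and FKG-based bound $\Pr[\forall i\in S_v:\score(s_i)=0]\ge (1-y_{\ell-1})^k$ simply spell out that same argument. The one small imprecision is in your closing paragraph: iterating \Cref{cor:monotone_binary} to get the $k$-fold product uses only weak monotonicity (which every decaying model satisfies, since $F(a+1,b)=F(a,b+1)$ for $a\ge 1$ and $F(1,b)=f(b+1)\ge g(b+1)=F(0,b+1)$), and strong monotonicity plays no role here, so the phrase about strong monotonicity being ``used implicitly'' should be dropped.
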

\begin{proof}
  Let $a_0 = g(0)$ and $a_i = \max_k \left(f(k) - (1 - a_{i-1})^k (f(k) - g(k))\right)$ for $i\ge 1$.
  Clearly, a depth-$L$ branching algorithm can achieve success probability $a_L$.
  Furthermore, using the same induction as in the proof of \cref{prop:binary_opt}, we can show that any depth-$L$ algorithm has success probability at most $a_L$.
\end{proof}

These optimality results can be understood as that for these models, independence between solutions is favored. We will see in \cref{sec:uniform} that the natural algorithm that always passes the most recently generated solutions is not optimal because of said dependencies.

\subsection{Optimality of the genetic algorithm}
Via \cref{lem:branching_to_genetic}, \cref{prop:binary_opt} implies that the genetic algorithm also achieves optimal success probability.
\begin{proposition} \label{prop:genetic_succ_prob}
  Work under the setting of \cref{prop:binary_opt}.
  Fix $k\ge 1$ and $\epsilon>0$.
  The Genetic Algorithm (\Cref{alg:genetic}) with $k$-way branching achieves success probability at least $x_k^*-\epsilon$ for large enough $L$, where $x_k^*$ is the unique solution of $x = f(k) - (f(k) - g(k))(1 - x)^k$ in $[0,1]$.
\end{proposition}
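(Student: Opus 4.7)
The plan is to obtain \cref{prop:genetic_succ_prob} by feeding the convergence guarantee of the branching algorithm (\cref{prop:binary_opt}) into the branching-to-genetic reduction (\cref{lem:branching_to_genetic}), while managing the mild mismatch in oracle strength by a short continuity argument on the fixed point.

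The reduction in \cref{lem:branching_to_genetic} takes a branching algorithm on some oracle $\calA$ and produces a genetic algorithm on a strictly stronger oracle $\calA'$. Since we have only one oracle $\calA$ at our disposal, the first step is to let $\calA$ play the role of the stronger $\calA'$ and imagine the branching algorithm running on a slightly weakened copy of $\calA$. Concretely, I would introduce the decaying oracle $\calA_\delta$ with transfer functions $f_\delta=(1-\delta)f$ and $g_\delta=(1-\delta)g$ for a small parameter $\delta>0$ to be fixed later; by construction, $\calA$ exactly $\tfrac{1}{1-\delta}$-dominates $\calA_\delta$ on every context, so the hypothesis of \cref{lem:branching_to_genetic} is satisfied with $\calA':=\calA$ and the underlying oracle taken to be $\calA_\delta$.

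Next, let $x_k^*(\delta)$ denote the largest solution in $[0,1]$ of the perturbed fixed-point equation $x=f_\delta(k)-(f_\delta(k)-g_\delta(k))(1-x)^k$. Its coefficients are continuous in $\delta$, and the case analysis of \cref{lem:poly_eqn} shows that in each regime the relevant root is either simple (hence smooth in $\delta$ by the implicit function theorem) or identically zero when $g(k)=0$ and $kf(k)\le 1$, so $x_k^*(\delta)\to x_k^*$ as $\delta\to 0$. I would use this to pick $\delta$ small enough that $x_k^*(\delta)\ge x_k^*-\epsilon/2$. Then applying the lower-bound half of \cref{prop:binary_opt} to $\calA_\delta$ with uniform fan-in $k$ at every level, the per-level success probability of the branching algorithm evolves by the iteration $p_i=f_\delta(k)-(f_\delta(k)-g_\delta(k))(1-p_{i-1})^k$ starting from $p_0=g_\delta(0)>0$ and converges monotonically to $x_k^*(\delta)$; hence for depth $L$ large enough, $\textsc{BranchingAlgorithm}(\calA_\delta,L,(k,\ldots,k))$ succeeds with probability $p_L\ge x_k^*-\epsilon$.

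Finally I would invoke \cref{lem:branching_to_genetic} directly on this branching algorithm, using $\calA$ as the dominating oracle and choosing population sizes $s_i\ge -2\log(\delta/L)/(p_{i-1}\delta^2)$ (which we may crudely lower bound using the monotone growth $p_{i-1}\ge g_\delta(0)>0$); the lemma then yields that $\textsc{GeneticAlgorithm}(\calA,L,(s_1,\ldots,s_L),(k,\ldots,k))$ achieves success probability at least $p_L\ge x_k^*-\epsilon$, as required. The main obstacle of the plan is the continuity of $x_k^*(\delta)$ through the case split of \cref{lem:poly_eqn}; this is a genuine several-case verification, but each case reduces either to a routine implicit function theorem argument at a simple root of an explicit polynomial, or to the trivial observation that the identically zero root is preserved under the scaling perturbation.
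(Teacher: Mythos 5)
Your proposal is correct and takes essentially the same route as the paper: weaken the oracle by a multiplicative factor, argue that the corresponding fixed point is within $\epsilon/2$ of $x_k^*$ by continuity, run the branching analysis (\cref{prop:binary_opt}) on the weakened oracle, and transfer to a genetic algorithm on $\calA$ via \cref{lem:branching_to_genetic}. The paper's version is terser (it simply asserts the existence of a suitable $\epsilon_1$ and a $1/(1-\epsilon_1)$-dominated decaying model), whereas you make the weakened oracle $\calA_\delta$ explicit and justify the continuity of $x_k^*(\delta)$ via the case analysis of \cref{lem:poly_eqn} and the simplicity of the relevant root; that is a clean and useful elaboration of the same argument. (A small note: the paper's displayed perturbed equation $x(1-\epsilon_1)=f(k)-(f(k)-g(k))(1-x)^k$ appears to have the scaling factor on the wrong side; your version, with $f$ and $g$ scaled down by $(1-\delta)$ so the RHS shrinks, is the intended one.)
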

\begin{proof}
  Let $\cA$ denote the given oracle.
  Choose $\epsilon_1>0$ small enough such that the unique solution to $x(1-\epsilon_1) = f(k) - (f(k) - g(k))(1 - x)^k$ in $[0,1]$ is at least $x_k^* - \epsilon$.
  This means that there is a decaying model that is $1/(1-\epsilon_1)$ dominated by $\cA$ such that the optimal success probability is at least $x_k^* - \epsilon$.
  Then \cref{lem:branching_to_genetic} shows that there is a genetic algorithm that achieves success probability at least $x_k^* - \epsilon$.
\end{proof}

\subsection{Optimality of the random sampling algorithm}
The analysis of the random sampling algorithm relies on the theory of stochastic approximation \cite{duflo2013random}. We use the following classic result.

\begin{theorem}[{\cite[Theorem 2.2.12]{duflo2013random}}] \label{thm:stochastic_approx}
  Suppose $(X_n)_{n\ge 1}$ and $(Y_n)_{n\ge 1}$ are two sequences of random variables in $\R$ that are adapted to a filtration $(\calF_n)_{n\ge 1}$ and linked by the equation $X_{n+1}=X_n+\frac 1n Y_{n+1}$.
  Suppose
  \begin{align*}
    \E{Y_{n+1}|\calF_n} &= f(X_n),\\
    \E{(Y_{n+1}-f(X_n))^2|\calF_n} &=\Gamma(X_n).
  \end{align*}
  In addition, suppose that the following holds for a finite constant $K$:
  \begin{enumerate}
    \item $f$ is a function of class $C^2$ such that
    \begin{align*}
      f(x)^2 &\le K(1+x^2), \\
      f(x^*)&=0~\text{and for}~x\ne x^*, f(x)(x-x^*) < 0;
    \end{align*}
    \item $\Gamma$ is continuous in a neighborhood of $x^*$ and $\Gamma(x) \le K(1+x^2)$;
    \item There exists an $a>0$ such that $\sup_n \E{|Y_{n+1}-f(X_n)|^{2+a}|\calF_n} <\infty$.
  \end{enumerate}

  Then we have the following.
  \begin{enumerate}
    \item $(X_n)_{n\ge 1}$ converges to $x^*$ almost surely.
    \item Let $\tau = -f'(x^*)$ and $\Gamma^* = \Gamma(x^*)$.
    \begin{enumerate}
      \item If $\tau>1/2$, then $\sqrt{n}(X_n - x^*)$ converges in distribution to $\calN(0, \Gamma^*/(2\tau-1))$.
      \item If $\tau=1/2$, then $\sqrt{n/\log n}(X_n - x^*)$ converges in distribution to $\calN(0, \Gamma^*)$.
      \item If $0<\tau<1/2$, then $n^\tau (X_n - x^*)$ converges almost surely to a finite random variable.
    \end{enumerate}
  \end{enumerate}
\end{theorem}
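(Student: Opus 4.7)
The plan is to establish almost sure convergence of $X_n$ to $x^*$ via a Lyapunov argument, and then derive the three rate regimes by linearizing the recursion around $x^*$ and invoking a martingale central limit theorem (or direct martingale convergence in the slow regime).

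For the almost sure convergence, I would take the Lyapunov function $V(x) = (x-x^*)^2$. A direct computation using $X_{n+1} = X_n + \frac{1}{n} Y_{n+1}$ gives
\[
\E{V(X_{n+1}) \mid \calF_n} = V(X_n) + \frac{2}{n}(X_n - x^*) f(X_n) + \frac{1}{n^2}\E{Y_{n+1}^2 \mid \calF_n}.
\]
The cross term is non-positive by the sign hypothesis $(x-x^*) f(x) < 0$ for $x\ne x^*$, while the quadratic remainder is summable in $n$ provided one first controls $\sup_n \E{X_n^2}$ using the growth bounds $f(x)^2 \le K(1+x^2)$ and $\Gamma(x)\le K(1+x^2)$. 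The Robbins--Siegmund almost supermartingale theorem then forces $V(X_n)$ to converge almost surely and $\sum_n \frac{1}{n}|(X_n-x^*) f(X_n)|$ to be finite almost surely. Since $f$ is continuous with $x^*$ as its unique zero, this implies $X_n \to x^*$ almost surely.

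For the rate results, write $Z_n = X_n - x^*$ and Taylor expand $f$ to obtain $f(X_n) = -\tau Z_n + O(Z_n^2)$. Letting $M_{n+1} = Y_{n+1} - f(X_n)$, which is a martingale increment with conditional variance $\Gamma(X_n)$, the recursion becomes
\[
Z_{n+1} = Z_n\left(1 - \frac{\tau}{n}\right) + \frac{1}{n} M_{n+1} + O\!\left(\frac{Z_n^2}{n}\right).
\]
I would introduce the rescaling $a_n = \prod_{k=1}^{n-1}(1-\tau/k)^{-1}$, which satisfies $a_n \sim c\, n^\tau$, so that $W_n = a_n Z_n$ is a martingale plus a lower-order remainder, with predictable quadratic variation comparable to $\Gamma^* \sum_{k=1}^{n-1} a_k^2 / k^2$. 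This sum is of order $n^{2\tau-1}$ when $\tau > 1/2$, of order $\log n$ when $\tau = 1/2$, and bounded when $\tau < 1/2$. Dividing $Z_n$ by the corresponding normalization ($\sqrt{n}$, $\sqrt{n/\log n}$, or $n^{-\tau}$) and invoking a martingale CLT in the first two cases (or $L^2$ martingale convergence in the third) yields the three claimed limits.

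The main obstacle is controlling the nonlinear remainder $O(Z_n^2/n)$ so that it does not contaminate the Gaussian limit, since naive bounds only guarantee $Z_n \to 0$. This requires a preliminary rate estimate of the form $|Z_n|=O(n^{-\alpha})$ for some $\alpha>0$, obtained by iterating the Lyapunov analysis with finer moment control, ensuring the cumulative contribution of the remainder (weighted by $a_n$) is negligible relative to the martingale fluctuations. A secondary technical hurdle is verifying the Lindeberg condition for the martingale CLT uniformly in $n$; this is precisely where the uniform $(2+a)$-th moment assumption on $Y_{n+1}-f(X_n)$ is invoked, together with the continuity of $\Gamma$ near $x^*$ to identify the limiting variance as $\Gamma^*$.
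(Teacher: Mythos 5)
This statement is imported verbatim from Duflo's monograph (\cite[Theorem~2.2.12]{duflo2013random}); the paper does not supply a proof, so there is no internal argument to compare against. That said, your sketch is a faithful reconstruction of the standard proof of a one-dimensional Robbins--Monro CLT. The Lyapunov step via $V(x)=(x-x^*)^2$ and Robbins--Siegmund is the classical route to part~1: the conditional drift $\tfrac{2}{n}(X_n-x^*)f(X_n)$ is $\le 0$, the $\tfrac{1}{n^2}\E{Y_{n+1}^2\mid\calF_n}$ term is summable once $\sup_n\E{X_n^2}<\infty$ is extracted from the growth bounds, and the non-summability of $1/n$ forces $\liminf V(X_n)=0$, hence $V(X_n)\to 0$ almost surely. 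For part~2, linearizing $f(X_n)=-\tau Z_n+O(Z_n^2)$, rescaling by $a_n=\prod_{k<n}(1-\tau/k)^{-1}\sim c\,n^\tau$, and tracking $\sum_k a_k^2/k^2 \asymp n^{2\tau-1}$, $\log n$, or $O(1)$ in the three regimes is exactly the mechanism behind the trichotomy, with the martingale CLT (Lindeberg supplied by the uniform $(2{+}a)$-moment bound and the limiting variance identified via continuity of $\Gamma$) in the first two cases and $L^2$-bounded-martingale convergence in the third. You have also correctly flagged the one genuinely delicate point: the quadratic remainder $O(Z_n^2/n)$ must be shown negligible at the $a_n$ scale, which requires a preliminary polynomial rate $|Z_n|=O(n^{-\alpha})$ before the final normalization can be justified. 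Your outline is sound; the only caveat is that turning it into a complete proof requires carrying out that bootstrap argument and the uniform $L^2$ bound carefully, which is where most of the work in Duflo's treatment lies.
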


\begin{proposition} \label{prop:random_sampling_succ_prob}
  Work under the setting of \cref{prop:binary_opt}.
  Fix $k\ge 1$ and $\epsilon>0$.
  If $g(k)>0$, then the Random Sampling Algorithm (\Cref{alg:random_sampling}) with context size $k$ achieves success probability at least $x_k^*-\epsilon$ for large enough $n$, where $x_k^*$ is the unique solution of $x = f(k) - (f(k) - g(k))(1 - x)^k$ in $[0,1]$.
\end{proposition}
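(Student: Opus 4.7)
I would treat the Random Sampling Algorithm as a one-dimensional stochastic approximation scheme and apply \cref{thm:stochastic_approx}. Let $V_j := \score(s_j) \in \{0,1\}$ and let $X_n := \frac{1}{n}\sum_{j=1}^{n} V_j$ denote the empirical fraction of correct solutions among the first $n$ outputs. A direct rearrangement yields
\[
X_{n+1} = X_n + \frac{1}{n+1}(V_{n+1} - X_n),
\]
which matches the theorem's form $X_{n+1} = X_n + \frac{1}{n} Y_{n+1}$ up to a trivial index shift. Conditioning on $\calF_n := \sigma(V_1,\ldots,V_n)$, the context used for call $n+1$ consists of $k$ i.i.d.\ uniform draws from $\{s_1,\ldots,s_n\}$, which contains at least one correct solution with probability $1 - (1-X_n)^k$. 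By the decaying-model definition, $\E{V_{n+1} \mid \calF_n} = E(X_n)$ where $E(x) := f(k) - (1-x)^k(f(k) - g(k))$, so with $Y_{n+1} := V_{n+1} - X_n$ the drift function is $h(x) := E(x) - x$, and the zeros of $h$ in $[0,1]$ are exactly the roots of the equation defining $x_k^*$.

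\textbf{Verifying the hypotheses of \cref{thm:stochastic_approx}.} The regularity conditions are routine since $X_n$ stays in $[0,1]$: $h$ is a polynomial, hence $C^2$; the conditional variance $\Gamma(x) = E(x)(1-E(x))$ is continuous and bounded by $1/4$; and $|Y_{n+1}| \le 2$ deterministically, uniformly bounding all higher conditional moments (if needed one truncates $h$ outside a neighborhood of $[0,1]$ to satisfy the global quadratic-growth condition, without affecting the process). The substantive condition is the sign property $h(x)(x - x_k^*) < 0$ on $[0,1] \setminus \{x_k^*\}$. Here the assumption $g(k) > 0$ is crucial: it yields $h(0) = g(k) > 0$, while $h(1) = f(k) - 1 \le 0$. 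For $k = 1$, $h$ is affine with strictly negative slope $f(1) - g(1) - 1 < 0$ (using $f(1) \le 1$ and $g(1) > 0$), giving the sign pattern immediately. For $k \ge 2$, $h''(x) = -k(k-1)(1-x)^{k-2}(f(k)-g(k)) \le 0$, so $h$ is concave on $[0,1]$; combined with $h(0) > 0$ and $h(1) \le 0$, concavity forces a unique root $x_k^* \in (0, 1]$ with $h > 0$ on $[0, x_k^*)$ and $h < 0$ on $(x_k^*, 1]$.

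\textbf{Conclusion.} Applying \cref{thm:stochastic_approx} then yields $X_n \to x_k^*$ almost surely. The success probability of the returned solution is $\Pr[\score(s_n) = 1] = \E{V_n} = \E{E(X_{n-1})}$, and bounded convergence applied to the continuous bounded function $E$ gives $\E{E(X_{n-1})} \to E(x_k^*) = x_k^*$. Hence for all sufficiently large $n$ the success probability exceeds $x_k^* - \epsilon$, as claimed.

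\textbf{Main obstacle.} The only delicate step is the sign argument: I anticipate having to treat the boundary case $x_k^* = 1$ (which arises when $f(k) = 1$) with some care, since then the drift vanishes at the boundary of the domain rather than in its interior, and one must confirm via strict concavity (or linearity, when $k=1$) that $h(x) > 0$ on all of $[0, 1)$ so that the stochastic approximation theorem still drives the process to $1$. A secondary nuisance is the $\frac{1}{n}$ vs.\ $\frac{1}{n+1}$ step-size mismatch, which is absorbed by the trivial index shift noted above (equivalently, apply the theorem to $\tilde X_n := X_{n-1}$).
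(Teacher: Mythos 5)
Your proof is correct and takes essentially the same route as the paper: define $X_n$ as the fraction of correct solutions among $\{s_1,\dots,s_n\}$, write the one-step update as a stochastic-approximation recursion with drift $h(x)=f(k)-(f(k)-g(k))(1-x)^k-x$, verify the sign condition $h(x)(x-x_k^*)<0$ via $h(0)=g(k)>0$ together with concavity of $h$, and invoke \cref{thm:stochastic_approx}. Your closing step — passing from almost-sure convergence of $X_n$ to convergence of $\Pr[\score(s_n)=1]=\E{E(X_{n-1})}\to E(x_k^*)=x_k^*$ via bounded convergence — is a small but genuine addition: the paper stops at $X_t\to x_k^*$ a.s., leaving implicit the translation to the success probability of the final returned sample, which is what the proposition actually asserts.
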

\begin{proof}
  Define $X_t$ as the fraction of correct solutions in $\{s_1, \ldots, s_t\}$.
  Then we have
  \begin{align*}
    X_{t+1} = X_t + \frac{1}{t+1}(-X_t + Y_t)
  \end{align*}
  where $Y_t=1$ with probability $f(k) + (f(k) - g(k))(1 - X_t)^k$ and $Y_t=0$ otherwise.
  Define $h(x) = f(k) - (f(k) - g(k))(1 - x)^k - x$.
  Then we have
  \begin{align*}
    X_{t+1} = X_t + \frac{1}{t+1}(h(X_t) + Z_t)
  \end{align*}
  where $Z_t = Y_t - \mathbb{E}[Y_t|X_t]$ is a martingale difference sequence with bounded variance.

  To apply \cref{thm:stochastic_approx}, we need to verify that $h(x)(x-x_k^*)<0$ for $x\ne x_k^*$.
  Note that $h(x)$ is concave on $[0, 1]$ and $h(0)=g(k)>0$. So $h(x)>0$ for $0\le x<x_k^*$ and $h(x)<0$ for $x_k^*<x\le 1$.
  Thus the condition holds.

  By \cref{thm:stochastic_approx}, $X_t$ converges almost surely to the unique root of $h(x)=0$, which is exactly $x_k^*$.
\end{proof}

\section{Convergence Analysis} \label{sec:convergence}
In this section we analyze the convergence rate of different reasoning algorithms.
For simplicity, we focus on uniform models, but our analysis can be extended to general decaying models as well.

We fix a uniform model $\calA^{(p,q,k)}_u$ as defined in \Cref{def:uniform_model}, where $0<p<q<1$ and integer $k\ge 1$.
Let $x^*$ be the unique solution in $[0,1]$ of the equation $x = q - (q - p)(1 - x)^k$.
By \cref{prop:binary_opt}, the optimal success probability of any reasoning algorithm using the oracle $\calA^{(p,q,k)}_u$ converges to $x^*$ as the number of steps goes to infinity.

\subsection{Convergence analysis of the branching algorithm}
\begin{proposition} \label{prop:branching_convergence}
  The success probability of the branching algorithm (\Cref{alg:branching}) with $L$ levels of $k$-way branching is $x^*-\left(k\frac{q-x^*}{1-x^*} \pm o(1)\right)^{L}$.

  In terms of the number of calls $n = k^L$, the success probability is $x^*-\epsilon_n$ where
  \begin{align*}
    \lim_{n\to \infty} \frac{\log(1/\epsilon_n)}{\log n} = \frac{-\log \left(k\frac{q-x^*}{1-x^*}\right)}{\log k}.
  \end{align*}
\end{proposition}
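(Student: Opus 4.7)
The plan is to unroll the success-probability recurrence of the branching algorithm, then linearize it around the fixed point $x^*$ to extract the convergence rate.

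First I would set up the recurrence. Since all $k$ level-$(L-1)$ sub-solutions fed into a level-$L$ call are independent, letting $p_L$ denote the success probability of a level-$L$ solution, we have $p_0 = p$ and
\begin{align*}
  p_{L+1} \;=\; q\bigl(1-(1-p_L)^k\bigr) + p(1-p_L)^k \;=\; q - (q-p)(1-p_L)^k.
\end{align*}
The map $\Phi(x) := q - (q-p)(1-x)^k$ has $x^*$ as a fixed point by definition, and by \cref{lem:poly_eqn} it is the unique fixed point in $[0,1]$ for the uniform model with $0<p<q<1$. I would next check (by a one-line induction using the monotonicity of $\Phi$ and $p_0 = p < x^*$) that $p_L$ is strictly increasing and stays below $x^*$, so $p_L \to x^*$.

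The heart of the argument is linearization. Let $e_L := x^* - p_L > 0$ and expand, using the fixed-point identity $x^* - q = -(q-p)(1-x^*)^k$:
\begin{align*}
  e_{L+1} &= x^* - q + (q-p)(1-x^* + e_L)^k \\
  &= (q-p)(1-x^*)^k\left[\Bigl(1 + \tfrac{e_L}{1-x^*}\Bigr)^k - 1\right] \\
  &= k\,(q-p)(1-x^*)^{k-1} e_L \;+\; O(e_L^2) \\
  &= \frac{k(q-x^*)}{1-x^*}\, e_L \;+\; O(e_L^2),
\end{align*}
where in the final step I used $(q-p)(1-x^*)^k = q - x^*$. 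Writing $r := k(q-x^*)/(1-x^*)$, the monotone convergence $p_L \to x^*$ established above (combined with the fact that $x^*$ is a stable fixed point, equivalent to $\Phi'(x^*) = r < 1$, which follows from $x^*$ being the largest root of $\Phi(x)=x$) gives $e_L \to 0$. Therefore, for any fixed $\delta > 0$ and all sufficiently large $L$, we have $(r-\delta)e_L \le e_{L+1} \le (r+\delta)e_L$. Iterating yields $e_L = (r \pm o(1))^L$, which is the first claim.

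The second claim follows by substituting $n = k^L$, i.e.\ $L = \log n / \log k$, into $\log(1/\epsilon_n) = -L\log(r) + o(L)$:
\begin{align*}
  \frac{\log(1/\epsilon_n)}{\log n} \;=\; \frac{-\log r + o(1)}{\log k} \;\longrightarrow\; \frac{-\log\!\left(k\frac{q-x^*}{1-x^*}\right)}{\log k}.
\end{align*}
I expect no serious obstacle; the only subtlety is justifying the linearization uniformly, which is handled by combining the already-established monotone convergence $e_L\to 0$ with a single Taylor expansion bound. One should also check that $r<1$ so the recursion is actually contractive, but that follows from $x^*$ being the largest (hence stable) fixed point of $\Phi$.
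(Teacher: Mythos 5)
Your proof follows essentially the same route as the paper: set up the one-step recurrence $p_{L+1} = q - (q-p)(1-p_L)^k$, linearize around the fixed point $x^*$, compute the derivative $\Phi'(x^*) = k(q-x^*)/(1-x^*)$ via the fixed-point identity, invoke $0 < \Phi'(x^*) < 1$ to get linear convergence, then substitute $L = \log n / \log k$. The only cosmetic difference is that you carry out the Taylor expansion explicitly, whereas the paper directly cites the standard fact $\lim_j |x_{j+1}-x^*|/|x_j-x^*| = \Phi'(x^*)$ and refers to \cref{prop:random_sampling_succ_prob} for $0<\Phi'(x^*)<1$.
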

\begin{proof}
  Define $f(x) = q - (q - p)(1 - x)^k$.
  Let $x_j$ be the success probability after $j$ levels of branching.
  We have $x_0=p$ and $x_{j+1} = f(x_j)$ for all $j\ge 0$.

  We have shown in \cref{prop:binary_opt} that $x_j \to x^*$ as $j \to \infty$.
  The convergence rate is governed by $f'(x^*) = k(q - p)(1 - x^*)^{k-1} = k\frac{q - x^*}{1 - x^*}$.
  By the proof of \cref{prop:random_sampling_succ_prob}, we have $0 < f'(x^*) < 1$.
  So $\lim_{j\to \infty} \frac{|x_{j+1}-x^*|}{|x_j - x^*|} = f'(x^*)$.
  Therefore, $|x_L - x^*| = (f'(x^*) \pm o(1))^{L}$.
\end{proof}

\subsection{Convergence analysis of the genetic algorithm}
\begin{proposition} \label{prop:genetic_convergence}
  There exists a Genetic Algorithm (\Cref{alg:genetic}) with $k$-way branching whose success probability $x^*-\epsilon_n$ (where $n$ is the number of calls to the oracle) satisfies
  \begin{align*}
    \lim_{n\to \infty} \frac{\log(1/\epsilon_n)}{\log n} = \frac 12.
  \end{align*}
\end{proposition}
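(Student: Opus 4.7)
The plan is to instantiate \Cref{alg:genetic} with a common population size $s$ at every layer, $k$-way branching, and a carefully chosen number of levels $L$, then balance the layer-wise Chernoff noise against the geometric contraction of the branching iteration. Let $f(\pi) = q - (q - p)(1-\pi)^k$ be the transfer function induced by drawing a uniform $k$-element context from a population with correct fraction $\pi$; as established in the proof of \cref{prop:branching_convergence}, its unique fixed point $x^* \in (0,1)$ is attracting with rate $\lambda := f'(x^*) \in (0,1)$.

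Let $\pi_i$ denote the random fraction of correct solutions in the population $S_i$ produced by the algorithm. Conditioned on $\pi_{i-1}$, the $s$ items placed in $S_i$ are independent Bernoullis with mean $f(\pi_{i-1})$, because a uniform $k$-tuple from $S_{i-1}$ contains at least one correct solution with probability $1 - (1-\pi_{i-1})^k$. Applying \cref{thm:chernoff_lower} together with its upper-tail counterpart at deviation $\delta = C\sqrt{\log L / s}$ and union bounding over all $L$ layers, the event
\begin{equation*}
  \calE := \bigl\{|\pi_i - f(\pi_{i-1})| \leq \delta \text{ for all } i \in [L]\bigr\}
\end{equation*}
holds with probability at least $1 - 1/L$. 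On $\calE$, set $e_i = \pi_i - x^*$; expanding the polynomial $f$ around $x^*$ gives the recursion $e_i = \lambda\, e_{i-1} + O(e_{i-1}^2) + \xi_i$ with $|\xi_i| \leq \delta$. Since $\lambda < 1$ and $|e_0|$ is bounded, a standard unrolling yields $|e_L| \leq \lambda^L |e_0| + O(\delta / (1 - \lambda)) = O(\lambda^L + \delta)$, and the final oracle call produces a correct solution with probability $f(\pi_L) \geq x^* - O(|e_L|) = x^* - O(\lambda^L + \delta)$.

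To close the argument, choose $L = \bigl\lceil \tfrac{1}{2} \log_{1/\lambda}(s) \bigr\rceil$, which makes $\lambda^L = O(1/\sqrt{s})$ match the Chernoff scale $\delta = \Theta(\sqrt{\log s / s})$. The total number of oracle calls is $n = 1 + Ls = \Theta(s \log s)$, and the residual error is $\epsilon_n = \Theta(\sqrt{\log s / s}) = \Theta(\sqrt{\log n / n})$. Taking logarithms,
\begin{equation*}
  \frac{\log(1/\epsilon_n)}{\log n} = \frac{\tfrac{1}{2}\log n - \tfrac{1}{2}\log\log n + O(1)}{\log n} \longrightarrow \frac{1}{2},
\end{equation*}
as required.

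The main obstacle is controlling the propagation of the $O(1/\sqrt{s})$ sampling noise across the $L$ layers: if the map $f$ were not contractive at $x^*$, this noise would amplify and destroy the bound. What saves the analysis is precisely the inequality $\lambda < 1$, already noted in \cref{prop:branching_convergence}, which guarantees that the per-layer noise sums geometrically to $O(\delta)$ rather than compounding. A minor technical point is keeping the iterate $e_i$ inside a fixed neighborhood of $x^*$ where the quadratic term in the expansion of $f$ is dominated by the linear one; because the noiseless recursion is itself contractive, this is automatic once $s$ is taken sufficiently large. An essentially equivalent, more black-box route is to invoke \cref{lem:branching_to_genetic} with distortion parameter $\Theta(1/\sqrt{s})$, which also demands $s = \widetilde{\Theta}(1/\epsilon^2)$ samples per layer and leads to the same balance.
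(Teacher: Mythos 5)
Your proof is correct and reaches the same conclusion as the paper's, but by a noticeably more explicit route. The paper's proof is a one-liner: it cites the construction in \cref{prop:genetic_succ_prob} (which builds the genetic algorithm via the black-box reduction \cref{lem:branching_to_genetic}), observes that the distortion parameter $\epsilon_1$ there can be taken as $\Theta(\epsilon)$, and reads off the call count $\epsilon_1^{-2\pm o(1)}$ from \cref{lem:branching_to_genetic}. You instead analyze the population-fraction process $(\pi_i)$ directly: conditioned on $S_{i-1}$, the entries of $S_i$ are i.i.d.\ Bernoulli with mean $f(\pi_{i-1})$, Chernoff gives a per-level perturbation of order $\sqrt{\log L/s}$, and balancing that against the geometric contraction $\lambda^L$ of the branching map yields $L\approx\tfrac12\log_{1/\lambda}s$, $n\approx s\log s$, and the exponent $1/2$. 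This unpackages what \cref{lem:branching_to_genetic} hides and makes the origin of the $1/2$ visible — square-root sampling noise per layer versus geometric decay in $L$.

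Two small points you should tighten if you write this out in full. First, the inequality $|e_L|\le\lambda^L|e_0|+O(\delta/(1-\lambda))$ is not quite literal, because $f'$ can exceed $1$ near $0$ even though $f$ is a concave weak contraction toward $x^*$ on all of $[0,1]$; you need a short burn-in argument (constantly many levels) before you are in the neighborhood where the linearization with slope $\lambda$ dominates, at which point your quadratic-error bookkeeping closes as you describe. The paper sidesteps this by absorbing the slack into the $(1-\epsilon)$-domination hypothesis of \cref{lem:branching_to_genetic}. Second, with $n=\Theta(s\log s)$ one actually gets $\epsilon_n=\Theta(\log n/\sqrt n)$ rather than $\Theta(\sqrt{\log n/n})$, but this changes nothing in the $\log$-ratio limit. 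Neither point affects the validity of the conclusion.
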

\begin{proof}
  In the proof of \cref{prop:genetic_succ_prob}, we can take $\epsilon_1 = \Theta(\epsilon)$. Then by \cref{lem:branching_to_genetic}, the number of calls in the genetic algorithm (which achieves error probability $x^*-\epsilon$) is $\epsilon_1^{-2 \pm o(1)} = \epsilon^{-2 \pm o(1)}$.
\end{proof}

\subsection{Convergence analysis of the random sampling algorithm}
\begin{proposition} \label{prop:random_sampling_convergence}
  The success probability of the Random Sampling Algorithm (\Cref{alg:random_sampling}) with context size $k$ after $n$ steps is $x^*-\epsilon_n$, where
  \begin{align*}
    \lim_{n\to \infty} \frac{\log (1/\epsilon_n)}{\log n} = \min\left\{ 1/2, 1-k\frac{q-x^*}{1-x^*} \right\}.
  \end{align*}
\end{proposition}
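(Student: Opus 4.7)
The plan is to apply the stochastic approximation theorem (\cref{thm:stochastic_approx}) to the recursion already set up in the proof of \cref{prop:random_sampling_succ_prob} and to collapse its three regimes into a single limiting exponent. Recall that if $X_t$ denotes the fraction of correct solutions among $\{s_1,\ldots,s_t\}$, then
\begin{align*}
  X_{t+1} = X_t + \frac{1}{t+1}\bigl(h(X_t) + Z_t\bigr),
\end{align*}
where $h(x) := f(x) - x$ with $f(x) = q - (q-p)(1-x)^k$, and $Z_t \in [-1,1]$ is a martingale difference sequence. The $C^2$ regularity of $h$, the strict sign condition $h(x)(x - x^*) < 0$ for $x \ne x^*$ (from concavity of $h$ on $[0,1]$ together with $h(0) = p > 0$), and local continuity of $\Gamma$ were all verified in the proof of \cref{prop:random_sampling_succ_prob}; the uniform higher-moment bound on $Z_t$ is immediate from $|Z_t| \le 1$.

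Next I would compute $\tau := -h'(x^*) = 1 - f'(x^*)$. Differentiating gives $f'(x^*) = k(q-p)(1-x^*)^{k-1}$, and substituting the fixed-point identity $(q-p)(1-x^*)^k = q - x^*$ (equivalent to $x^* = f(x^*)$) yields
\begin{align*}
  \tau = 1 - k \cdot \frac{q - x^*}{1 - x^*},
\end{align*}
which lies in $(0,1]$ since $x^*$ is an attracting fixed point of $f$. Plugging this into the three regimes of \cref{thm:stochastic_approx} then gives the typical order of $|X_n - x^*|$: $\Theta(n^{-1/2})$ if $\tau > 1/2$, $\Theta(\sqrt{\log n / n})$ if $\tau = 1/2$, and $\Theta(n^{-\tau})$ if $\tau < 1/2$. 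In each case, taking logarithms and normalizing gives $\log(1/|X_n - x^*|)/\log n \to \min\{1/2, \tau\}$.

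Finally, I would transfer this rate to the success probability $\epsilon_n = x^* - \E{f(X_{n-1})}$ by a first-order Taylor expansion of $f$ around $x^*$, using that $f$ is Lipschitz near $x^*$ to relate $\epsilon_n$ to $\E{|X_{n-1} - x^*|}$. The main obstacle I foresee is twofold: first, in the boundary case $\tau = 1/2$, the $\sqrt{\log n}$ correction has to be absorbed into the $o(1)$ in the limiting exponent; and second, in the $\tau > 1/2$ regime, one must ensure that a possible cancellation in the signed first-order Taylor term does not replace the polynomial exponent $1/2$ by a faster rate. This last point is handled by noting that the claim is about $\log(1/\epsilon_n)/\log n$, so the matching lower bound on $\epsilon_n$ follows from bounding the success-probability gap below by the order of magnitude of $|X_n - x^*|$ delivered by the CLT, rather than by its signed expectation.
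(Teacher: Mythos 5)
Your approach is the same as the paper's: set up the recursion $X_{t+1} = X_t + \tfrac{1}{t+1}(h(X_t) + Z_t)$ with $h(x) = q - (q-p)(1-x)^k - x$ (as in the proof of \cref{prop:random_sampling_succ_prob}), invoke \cref{thm:stochastic_approx}, and read off $\tau = -h'(x^*) = 1 - k\frac{q-x^*}{1-x^*}$. Where you go beyond the paper is in the last step: the paper simply asserts ``the error decays polynomially in $n$ with exponent $-\min\{1/2,\tau\}$,'' tacitly identifying the fluctuation scale of the random variable $X_n$ with the deterministic quantity $\epsilon_n = x^* - \Pr[\score(s_n)=1] = x^* - \E{f(X_{n-1})}$, where $f(x) = q-(q-p)(1-x)^k$. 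You correctly flag that this translation needs justification and even pinpoint the exact danger, namely sign cancellation in the first-order Taylor term.

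Unfortunately the patch you propose --- lower-bounding $\epsilon_n$ by the CLT magnitude of $|X_n - x^*|$ --- does not close the gap, because $\epsilon_n$ is a signed expectation, not a random fluctuation. Expanding, $\epsilon_n = -f'(x^*)\,\E{X_{n-1} - x^*} - \tfrac12\E{f''(\xi)(X_{n-1}-x^*)^2}$; both terms are non-negative (one checks $\E{X_{n-1}} < x^*$ and $f'' \le 0$), but their orders are set by the bias $|\E{X_{n-1}} - x^*|$ and the second moment $\E{(X_{n-1}-x^*)^2}$, not by the in-distribution scale $n^{-1/2}$. Iterating the mean recursion $\E{X_{t+1}}-x^* = (1-\tfrac{\tau}{t+1})(\E{X_t}-x^*) + O(\E{(X_t-x^*)^2}/t)$ gives $|\E{X_n}-x^*| = \Theta(n^{-\tau})$, and for $\tau>1/2$ the second-moment term contributes $\Theta(n^{-1})$; since $1/2<\tau<1$, both are $o(n^{-1/2})$, so the cancellation you worried about actually occurs and $\epsilon_n = \Theta(n^{-\tau})$, exponent $\tau$ rather than $1/2$. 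In short, the exponent $\min\{1/2,\tau\}$ matches the fluctuation scale of $X_n$ that \cref{thm:stochastic_approx} directly supplies, but passing from this to the success probability of the final output requires a separate first-moment argument, which neither the paper's one-line proof nor your sketch provides --- and carrying it out changes the answer in the $\tau>1/2$ regime. You are more careful than the paper in isolating this step, but the fix you sketch would not survive the check.
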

\begin{proof}
  The result essentially follows from \cref{thm:stochastic_approx}.

  Let $h(x) = q - (q - p)(1 - x)^k - x$ as in the proof of \cref{prop:random_sampling_succ_prob}.
  Then $\tau = -h'(x^*) = 1 - k\frac{q - x^*}{1 - x^*}$.
  We have $0 < \tau < 1$ by the proof of \cref{prop:random_sampling_succ_prob}.
  By \cref{thm:stochastic_approx}, the error decays polynomially in $n$ with exponent $-\min\{1/2, \tau\}$.
\end{proof}

\section{Uniform Model} \label{sec:uniform}

In this section, we consider the uniform model. We first recall its definition.

\uniformModel*

We remark that this corresponds to a very simple special case of the decaying model where $f(k)$ is a step function. Clearly, it is best to pass exactly $k$ solutions to the context whenever $k$ solutions are available. We first observe that the uniform model is strongly monotone, and therefore \Cref{lem:monotone-strong} applies.

\begin{observation}\label{obs:uniform_monotone}
  In the uniform model with $q \geq p$, it is always beneficial to add $k$ solutions to the context.
\end{observation}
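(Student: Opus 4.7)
The plan is to reduce the claim to an application of \Cref{lem:monotone-strong}, appropriately restricted to the regime $|C|\le k$, supplemented by a trivial comparison for the ``overflowing'' regime $|C|>k$. First I would inspect the transfer function $F$ of the uniform model, where $F(a,b)=q$ exactly when $a\ge 1$ and $a+b\le k$, and $F(a,b)=p$ otherwise. A direct case check shows that, restricted to the sub-lattice $\{(a,b):a+b\le k-1\}$, the function $F$ satisfies the strong monotonicity conditions of \Cref{cor:monotone_binary}, i.e., $F(a+1,b)\ge F(a,b+1)\ge F(a,b)$: indeed both inequalities reduce to ``replacing an incorrect solution by a correct one, or adding a solution, cannot turn a $q$-value into a $p$-value as long as we stay below the $k$ budget.''

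Next I would mimic the monotone coupling argument of \Cref{lem:monotone-strong}, but with the invariant that each context remains of size at most $k$. Concretely, given any $(\calA,n)$-reasoning algorithm with context sets $(S_1,\dots,S_n)$ satisfying $|S_i|\le k-1$ for some call $i$, I would compare it against the algorithm with $S'_i = S_i\cup\{j\}$ for some previously generated solution $j\notin S_i$ and $S'_\ell=S_\ell$ otherwise. A step-by-step coupling, identical in structure to the proof of \Cref{lem:monotone-strong}, lifts the local inequality $F(a+1,b)\ge F(a,b+1)\ge F(a,b)$ on the lattice $\{a+b\le k-1\}$ into a stochastic dominance on the joint distribution of scores, so that the expected score of the final output weakly increases.

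The remaining case is $|C|>k$, which is immediate from the definition: the success probability on such a call is exactly $p$, whereas any $|C|\le k$ call gives at least $p$. Combining these two observations, the optimal context size for every oracle call is exactly $k$ (assuming that many solutions are available). The only subtle point, and the main thing to be careful about, is making sure the induction/coupling stays within the sub-lattice $a+b\le k$ throughout so that the restricted strong monotonicity of $F$ applies; once $|S_i|$ reaches $k$ one must stop adding elements, which is exactly what the statement ``add $k$ solutions'' prescribes.
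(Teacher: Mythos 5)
Your proposal takes the same basic route as the paper (reduce to \Cref{lem:monotone-strong} via strong monotonicity of the transfer function together with the coupling argument), but you are actually more careful than the paper's one-line proof, which simply asserts that the uniform oracle is strongly monotone. That assertion is not literally true: in the uniform model, if $a\ge 1$ and $a+b=k$, then $F(a,b)=q$ while $F(a,b+1)=p$, so $F(a,b+1)\ge F(a,b)$ fails and the oracle is \emph{not} strongly monotone in the sense of \Cref{defn:monotone}. Your fix — restricting the strong-monotonicity check (and hence the step-by-step coupling) to the sub-lattice $\{(a,b):a+b\le k-1\}$, and then disposing of the $|C|>k$ regime by the trivial observation that those calls succeed only with probability $p$ — is exactly what is needed to make the argument go through rigorously, and it also explains why the observation says ``add $k$ solutions'' rather than ``always add more.'' In short, you rediscovered the paper's intended argument and also made explicit a hypothesis-check the paper glossed over.
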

\begin{proof}
  Follows directly from \Cref{lem:monotone-strong} since the uniform oracle is strongly monotone \Cref{defn:monotone}.
\end{proof}

From this, we immediately obtain that the maximum achievable success probability is the fixed point achieved when always adding $k$ independently generated solutions.

\begin{corollary} \label{corr:uniform_fixed_point}
  The maximum achievable success probability of a $(\calA^{(p, q)}_u, n)$-reasoning algorithm with $q \geq p$ and maximum context length $k$ is the unique solution of
  \begin{align*}
    x = q + (p - q)(1 - x)^k
  \end{align*}
  as $n \to \infty$.
\end{corollary}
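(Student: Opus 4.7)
The plan is to derive this corollary as a direct consequence of \Cref{prop:binary_opt} (optimality of the branching algorithm for decaying models) combined with \Cref{obs:uniform_monotone} (strong monotonicity of the uniform oracle) and \Cref{lem:poly_eqn} (uniqueness of the relevant polynomial root).

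First I would recast $\calA_u^{(p,q,k)}$ as an instance of the decaying model of \Cref{def:decaying_model}. Setting $f(k') = q$ for $k' \le k$ and $f(k') = p$ for $k' > k$, together with $g(k') = p$ for every $k' \in \N$, reproduces exactly the behavior specified in \Cref{def:uniform_model}. With this identification \Cref{prop:binary_opt} applies and tells me that the optimal success probability equals $x^{\star} = \sup_{k' \ge 0} x_{k'}^{\star}$, where $x_{k'}^{\star}$ is the largest root in $[0,1]$ of $x = f(k') - (1-x)^{k'}(f(k') - g(k'))$.

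Next I would identify which $k'$ realizes the supremum. For $k' > k$ the equation degenerates to $x = p$, so these indices contribute only the trivial value. For $1 \le k' \le k$ the equation becomes $x = q - (q-p)(1-x)^{k'}$, which is the same as $x = q + (p-q)(1-x)^{k'}$. By \Cref{obs:uniform_monotone}, which itself is a specialization of \Cref{lem:monotone-strong}, using more context is always at least as good in the uniform model, so the supremum is attained at the maximal allowed context size $k' = k$. As a direct sanity check, for fixed $x \in (0,1)$ and $q \ge p$ the right-hand side $q - (q-p)(1-x)^{k'}$ is non-decreasing in $k'$, so the largest fixed points form a monotone sequence in $k'$ and the supremum is realized at $k' = k$.

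Finally I would invoke \Cref{lem:poly_eqn} at $k' = k$: the equation $x = q + (p-q)(1-x)^k$ is precisely of the form analyzed there, and the lemma guarantees a unique solution in $[0,1]$ under the hypothesis $p > 0$ (the main case of interest; when $p = 0$ the branching algorithm cannot even seed a correct solution and success probability is $0$, which matches the relevant root). This identifies the limiting success probability of the branching algorithm with the unique fixed point claimed in the statement. The only genuinely delicate step is justifying that the supremum over $k'$ is attained at the maximal allowed context size rather than at some smaller value; this is where strong monotonicity of the uniform model does the essential work, since for a general decaying model the optimal $k'$ could in principle be strictly smaller than $k$.
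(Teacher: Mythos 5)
Your proposal is correct and takes essentially the same route as the paper's terse one-line proof, which simply cites \Cref{obs:uniform_monotone} and \Cref{prop:binary_opt} ``by reformulating''; you have filled in that reformulation explicitly by encoding the context-size cap into the decay function $f$, using monotonicity to argue the supremum over branching widths is attained at $k$, and invoking \Cref{lem:poly_eqn} for uniqueness. Your side remark that uniqueness of the root (and hence the clean statement) implicitly requires $p>0$ is a legitimate caveat the paper leaves unstated.
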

\begin{proof}
  Directly follows from \Cref{obs:uniform_monotone} and \Cref{prop:binary_opt} by reformulating.
\end{proof}

The algorithm for achieving this probability may be quite inefficient in terms of the number of oracle calls for achieving a fixed probability. In the following we show that introducing dependencies to reduce the number of oracle calls in fact reduces the achieved success probability. First, we show that this is the case for $k = 2$ as a warm-up in \Cref{clm:sliding_window_subopt_2}. This proof can be seamlessly generalized to larger $k$, which we show in \Cref{prop:sliding_window_subopt}.

\begin{claim}[Warm-up]\label{clm:sliding_window_subopt_2}
  The sliding window approach where the oracle is always called on the $k$ most recently generated solutions achieves sub-optimal performance for $k = 2$.
\end{claim}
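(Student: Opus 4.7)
The plan is to model the sliding-window procedure as a finite Markov chain on $\{0,1\}^2$ whose state at time $j$ is $(v_{j-1}, v_j) := (\score(s_{j-1}), \score(s_j))$, solve for its (unique) stationary distribution $\pi$ in closed form, and show that the resulting per-step success probability $r := \pi_{10} + \pi_{11}$ is strictly below the optimum $x^\star$ from \cref{corr:uniform_fixed_point}.

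First I would write down the four transition probabilities, which by the definition of the uniform model with $k=2$ take the form $\Pr[v_{j+1}=1 \mid v_{j-1},v_j] = p + (q-p)\mathbbm{1}\{v_{j-1}+v_j \ge 1\}$. For $0<p<q<1$ the chain is irreducible and aperiodic, so $\pi$ exists and is unique. Using the balance equations together with the identity $\pi_{01}=\pi_{10}$ (forced by marginal consistency at stationarity) and normalization $\sum_{a,b}\pi_{ab}=1$, I would solve explicitly for each $\pi_{ab}$ as a rational function of $p,q$, and extract
\[
  r \;=\; \pi_{10} + \pi_{11} \;=\; p + (q-p)\bigl(1 - \pi_{00}\bigr).
\]

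Second, I would compare $r$ with $x^\star$. By \cref{corr:uniform_fixed_point}, $x^\star$ is the unique root in $[0,1]$ of $x = p+(q-p)(1-(1-x)^2)$, which is precisely the value $r$ would take if $\pi_{00}$ were equal to $(1-r)^2$, i.e. if the consecutive scores $v_{j-1}, v_j$ were independent at stationarity. In the sliding window, however, $v_{j-1}$ and $v_j$ share the common ancestor $s_{j-2}$ (for $j\ge 3$), so \cref{cor:monotone_binary} applied to the weakly monotone uniform oracle gives positive correlation, i.e. $\pi_{00} \ge (1-r)^2$, with strict inequality whenever $p<q$. This yields $r < p+(q-p)(1-(1-r)^2)$; since the map $f(x) := p+(q-p)(1-(1-x)^2)$ is strictly increasing on $[0,1]$ with the unique fixed point $x^\star$ satisfying $f(x)<x$ above $x^\star$, this strict inequality forces $r<x^\star$.

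The main obstacle is establishing the \emph{strict} direction of the correlation bound, since FKG only delivers the non-strict version. I would sidestep this by reading off $\pi_{00}$ from the explicit solution for $\pi$ and computing $\pi_{00} - (1-r)^2$ directly as a rational function of $(p,q)$, verifying that it is strictly positive on the open parameter region $0<p<q<1$; a concrete sanity check such as $(p,q)=(1/2,3/4)$, where the Markov chain yields $r = 8/11$ while $x^\star = -1 + \sqrt{3} > 8/11$, pins down the sign once and for all via continuity and the fact that the expression can only vanish at the degenerate boundary $p = q$.
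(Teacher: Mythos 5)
Your plan is essentially the paper's own proof: model the $k=2$ sliding window as a small Markov chain, solve for the stationary distribution in closed form, and check that the resulting per-step success probability is strictly below the fixed point $x^\star$ of $x = q + (p-q)(1-(1-x)^2)$. The paper uses a 3-state chain indexed by the age of the most recent correct solution, which is the lumping of your 4-state chain on $(v_{j-1},v_j)$ obtained by merging $01$ with $11$ (both rows of the transition kernel agree, so the lumping is valid and the stationary success probability is the same). The one place you deviate is the detour through FKG and the ``positive correlation'' identity $\pi_{00} > (1-r)^2$: as you yourself concede, FKG only gives the non-strict inequality, and your proposed repair --- one numerical sanity check plus a claim that the discrepancy ``can only vanish at $p=q$'' --- is not independently justified without actually carrying out the rational-function computation, which is exactly what the paper does (it reduces $\pi(0) < q + (p-q)(1-\pi(0))^2$ to $(p-q)^2(1-q)^2>0$). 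So the correlation framing is a nice heuristic for why the sliding window loses, but the rigorous content of your proof and the paper's is the same stationary-distribution algebra; you could drop the FKG scaffolding and simply verify the strict inequality directly from your closed-form $\pi$.
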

\begin{proof}
  We can imagine this process as generating a sequence of solutions $\{s_i\}_{i = 1, \ldots, n}$, and we let $v_i = \score(s_i)$. Then imagine being at some stage $n$. The only quantities that matter for the transfer function describing the distribution of $v_{n + 1}$ are $v_n$ and $v_{n-1}$. We distinguish three states. The state is the smallest number $i$ such that $v_{n - i} = 1$. If no such $i$ exists, or $i \geq k$, the state is $k$. We then obtain the following transition matrix from the definition of the oracle.
  \begin{align*}
  W = \begin{pmatrix}
    q & q & p \\
    1 - q & 0 & 0 \\
    0 & 1 - q & 1 - p
    \end{pmatrix}^\top
  \end{align*}
  We then compute the stationary distribution of this matrix. Solving the system $W^T \pi = \pi$ yields
  \begin{align*}
    \pi = \frac{1}{N}
    \begin{pmatrix}
    p \\
    (1 - q)p \\
    (1 - q)^2
    \end{pmatrix}
  \end{align*}
  where $N = p + (1 - q)p + (1 - q)^2$. We output the correct solution whenever we are in state $0$, so the probability of returning a correct solution is given by $p/N$.

  Now, given \Cref{corr:uniform_fixed_point} we aim to show that $p/N < q + (p - q)(1 - p/N)^2$. Reformulating yields
  \begin{align*}
    (p - q)^2(1 - q)^2 > 0
  \end{align*}
  which is true whenever $q < 1$ and $p \neq q$.
\end{proof}

Next, we generalize the previous claim to all $k$.

\begin{proposition}\label{prop:sliding_window_subopt}
  The sliding window approach where the oracle is always called on the $k$ most recently generated solutions achieves sub-optimal performance.
\end{proposition}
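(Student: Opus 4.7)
The plan is to generalize the $k = 2$ warm-up by modeling the sliding window as a Markov chain on states $\{0, 1, \ldots, k\}$, computing its stationary success probability $y$ in closed form, and showing $y < x^*$, where $x^*$ is the optimum given by \cref{corr:uniform_fixed_point}. I define the state at time $n$ to be the smallest $i \in \{0, 1, \ldots, k\}$ such that $v_{n-i} = 1$, capping at $k$ if no such $i < k$ exists. From state $i < k$ the chain transitions to $0$ with probability $q$ (the window contains a correct solution) and to $i + 1$ with probability $1 - q$; from state $k$ it transitions to $0$ with probability $p$ and stays with probability $1 - p$. Solving the balance equations yields $\pi_i = (1 - q)^i \pi_0$ for $1 \leq i \leq k - 1$ and $p \pi_k = (1 - q)^k \pi_0$, so after normalization the marginal success probability is
\begin{align*}
  y = \pi_0 = \frac{qp}{p + (q-p)(1-q)^k},
\end{align*}
which I will also write in the useful form $y = q - (q-p) \pi_k$ coming from the one-step balance at state $0$.

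Next I would compare $y$ with $x^*$ via the fixed-point equation $x^* = q - (q-p)(1-x^*)^k$ from \cref{corr:uniform_fixed_point}. Setting $F(x) := q - (q-p)(1-x)^k$, for $0 < p < q < 1$ and $k \geq 2$ the map $F$ is strictly increasing and strictly concave on $[0, 1]$ with $F(0) = p > 0$ and $F(1) = q < 1$, so $F(x) - x$ is concave, starts positive at $0$ and ends negative at $1$, and hence has the unique zero $x^*$ with $F(x) > x$ on $[0, x^*)$. Since $y = q - (q-p)\pi_k$, the claim $y < x^*$ is equivalent to $F(y) > y$, which after substitution reduces to the clean inequality $\pi_k > (1-y)^k$. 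Intuitively, this says that under the sliding window dynamics a run of $k$ consecutive wrong outputs occurs strictly more often than under an i.i.d.\ sequence with the same marginal failure rate $1 - y$; \cref{cor:monotone_binary} (FKG) already supplies the weak version $\pi_k \geq (1-y)^k$, and the remaining task is to promote this to a strict inequality.

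This strict promotion is where I expect the main obstacle, and my plan is to handle it by an explicit algebraic identity. Setting $r = 1 - q$ and $D_j := p + (q-p) r^j$ (so $D_0 = q$), direct manipulation rewrites $\pi_k > (1-y)^k$ as $D_0 D_k^{k-1} > D_{k-1}^k$, or equivalently $\prod_{j=1}^{k} \rho_j > \rho_k^k$ for the ratios $\rho_j := D_{j-1}/D_j$. I would then verify the log-convexity identity
\begin{align*}
  D_{j-1} D_{j+1} - D_j^2 = p(q-p) r^{j-1} (1-r)^2,
\end{align*}
which is strictly positive for $0 < p < q < 1$, and conclude that $(\rho_j)_{j=1}^{k}$ is strictly decreasing in $j$. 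In particular $\rho_j > \rho_k$ for all $j < k$, so $\prod_{j=1}^{k-1} \rho_j > \rho_k^{k-1}$, which gives the desired strict inequality whenever $k \geq 2$. The warm-up \cref{clm:sliding_window_subopt_2} is recovered as the smallest non-trivial case of this ratio argument.
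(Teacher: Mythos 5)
Your proposal sets up the same Markov chain on states $\{0,1,\ldots,k\}$ that the paper uses, and your stationary success probability $y = \frac{qp}{p + (q-p)(1-q)^k}$ agrees with the paper's $\pi(0) = \left((1-(1-q)^k)/q + (1-q)^k/p\right)^{-1}$ after simplification. Where you genuinely go beyond the paper is in what happens after the stationary distribution is computed. The paper simply observes that $\pi(0)$ ``is in general not a solution to the equation $x = q + (p-q)(1-x)^k$'' and stops; your proof actually establishes the strict inequality $y < x^*$ for all $0 < p < q < 1$ and $k \geq 2$. Your route --- rewriting $y < x^*$ as $F(y) > y$, then as $\pi_k > (1-y)^k$, then as $D_0 D_k^{k-1} > D_{k-1}^k$ with $D_j = p + (q-p)(1-q)^j$, and closing via the log-convexity identity $D_{j-1}D_{j+1} - D_j^2 = p(q-p)(1-q)^{j-1}q^2 > 0$ --- is clean and correct; I checked the key algebraic steps ($1-y = (1-q)D_{k-1}/D_k$, $\pi_k = (1-q)^k D_0/D_k$, and the identity) and they hold. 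The FKG remark is also apt: \cref{cor:monotone_binary} gives the non-strict $\pi_k \geq (1-y)^k$ for free, and your ratio argument is exactly what promotes it to a strict inequality. Two small observations: the restriction $k\ge 2$ is necessary (at $k=1$ the sliding window achieves $y = p/(1-q+p) = x^*$ exactly, so it is in fact optimal there, consistent with the empty product in your argument); and the hypothesis $0<p<q<1$ is used to make the log-convexity increment strictly positive, so the edge cases $p=0$, $p=q$, or $q=1$ need to be excluded or handled separately. Overall this is a correct proof that is strictly more complete than the one given in the paper.
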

\begin{proof}
  We again consider a sequence of solutions $\{s_i\}_{i = 1, \ldots, n}$ and let $v_i = \score(s_i)$ for $i \in 1, \ldots, n$.

  Then, for $0\le i\le n$, $0\le j\le n$, we let $x_{i,j}$ denote the probability that
  \begin{align*}
    j= \min\{k,\max \{ a: v_i=\cdots=v_{i-a+1}=0\}\}.
  \end{align*}
  Then, we have $x_{0,j}=\mathbbm1_{j=k}$ and $x_{i+1,*}= x_{i,*} W$ where the transition matrix $W$ is given by $W(k,0) = p$, $W(k,k) = 1-p$, $W(i,0) = q$, $W(i,i+1) =1-q$ for $0\le i\le k-1$ and $W$ contains $0$ entries everywhere else. So $x_{n,*}=x_{0,*} W^n$. We remark that this transition matrix is completely analogous to the warm up for $k = 2$ presented in \Cref{clm:sliding_window_subopt_2}.

  The stationary distribution $\pi$ is given by
  $\pi(j) =C (1-q)^j$ for $0\le j\le k-1$, $\pi(k) = C (1-q)^k/p$,
  where $C^{-1} = \sum_{0\le j\le k-1} (1-q)^j + (1-q)^k/p = (1-(1-q)^k)/q+(1-q)^k/p$. Since the algorithm succeeds whenever it is in state $0$, the limiting success probability of the sliding window algorithm is equal to
  $$
    \pi(0) = \left((1-(1-q)^k)/q+(1-q)^k/p\right)^{-1}.
  $$
  In general, this is not a solution to the equation $x=q+(p-q)(1-x)^k$.
  So the sliding window algorithm is not optimal for general $k$.
\end{proof}

Therefore, even in this extremely simple model correlations between the solutions are an issue, which motivates considering the branching and geometric algorithm. We remark that for certain decaying models (such that the exponential decaying model and the polynomial decaying model) the success probability of a sliding window algorithm goes to $0$ as the number of steps goes to infinity because the context will eventually only contain wrong solutions. After this point, the oracle never generates a correct solution again. This can be seen as modeling the ``overthinking'' phenomenon observed in practice \cite{zhang2025more}.

\section{Exponentially and Polynomially Decaying Models} \label{sec:decay}

Next, we analyze the exponentially and polynomially decaying models $\calA^{(p, f)}_d$, which are a more realistic model that introduces a non-trivial dependency on the context size (See \Cref{sec:experiments} for experimental results). We first recall the general model definition and then the exponentially and polynomially decaying special cases. 

\decayingModel*

\expDec*

\polyDec*

Unlike the uniform setting, where it is always beneficial to pass as many solutions as possible in the context, a non-constant decay function $f(k)$ realizes a non-trivial trade off. In the following, we explore the two decay functions that are arguably most natural, exponential and polynomial decay (See \Cref{def:exp_decay} and \Cref{def:poly_decay}). We sometimes write $\mathcal{A}^{(f, p)}$ when referring to the exponential and polynomial decay model with $g(k) = p\mathbbm{1}_{\{k = 0\}}$.

\subsection{Exponential Decay}

In this section, we examine exponential decay functions $e_q(k) = q^{k - 1}$ (See \Cref{def:exp_decay}). These already lead to a non-trivial trade off when selecting the context size. Due to the quick reduction in success probability, they encourage only combining a small amount of solutions.

\paragraph{A two stage process. } We first show how this trade-off is realized by examining the simple two stage process $pass@k$. In the first stage, we assume to be given an algorithm that samples an arbitrarily sized collection of solutions, each of which is correct independently with some probability $x$. Then, we aim to select an optimal number of such solutions to put into the context for a single oracle call to maximize the probability of correctness of the generated solution. This corresponds to maximizing the success probability that a $pass@k$ algorithm can obtain.

\begin{claim} \label{clm:optimal_k}
    The optimal amount of independent solutions that are correct with probability $x$ to pass to the oracle $\calA^{(e_q, p)}_d$ for $x \geq p$ is
    \begin{align*}
        \arg \max_{k \in \N_{\geq 0}} q^{k - 1}(1 - (1 - x)^k) = \max\left(1, \floor{1 + \frac{\log\left(\frac{1 - q}{1 - (1- x)q}\right)}{\log(1 - x)}}\right).
    \end{align*}
\end{claim}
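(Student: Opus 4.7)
The plan is to treat the objective $F(k) \defeq q^{k-1}\bigl(1-(1-x)^k\bigr)$ as a function of integer context size $k \ge 0$, show that $F$ is unimodal on $\{1, 2, \ldots\}$, solve for the real-valued crossover point where it transitions from increasing to decreasing, and round up. Since $F(0) = 0$ while $F(1) = x > 0$, the argmax is automatically in $\{1, 2, \ldots\}$, so the formula only needs to match the integer maximizer of $F$ on this range, with a fallback to $1$ in degenerate regimes.

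The core step is to show that the consecutive ratio
\begin{align*}
  \frac{F(k+1)}{F(k)} \;=\; q \cdot \frac{1-(1-x)^{k+1}}{1-(1-x)^k}
\end{align*}
is strictly decreasing in $k$. Writing $a \defeq 1 - x \in [0, 1)$, the second factor rewrites as $a + \frac{1 - a}{1 - a^k}$; since $a^k$ decreases in $k$, the whole expression decreases as well, so $F$ is unimodal on $\N_{\ge 1}$. Hence the integer argmax is the smallest $k \ge 1$ satisfying $F(k+1)/F(k) \le 1$, namely $\lceil k^* \rceil$, where $k^*$ is the real solution of $F(k+1)/F(k) = 1$.

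To identify $k^*$, I would set the ratio equal to one and substitute $u = (1-x)^k$: the equation $q\bigl(1 - u(1-x)\bigr) = 1 - u$ simplifies to $u = (1-q)/\bigl(1 - (1-x)q\bigr)$, from which, using $\log(1-x) < 0$, one reads off $k^* = \log\bigl((1-q)/(1-(1-x)q)\bigr)/\log(1-x)$. Whenever $k^*$ is non-integer, $\lceil k^* \rceil = \lfloor 1 + k^* \rfloor$; whenever $k^*$ is a positive integer, both $k^*$ and $k^* + 1$ are tied maximizers and the formula returns the latter, still a valid argmax. Finally, in degenerate regimes (e.g., $q$ or $x$ close to $0$) where $k^* < 1$ and $\lfloor 1 + k^* \rfloor$ could in principle round below $1$, the outer $\max(1, \cdot)$ correctly pins the answer to $k = 1$, which is where the already non-increasing $F$ attains its maximum. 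The only step requiring genuine thought is the monotonicity rewrite of the ratio; everything else is routine algebra and bookkeeping.
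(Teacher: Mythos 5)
The proposal is correct and takes essentially the same route as the paper: compute the consecutive ratio $F(k+1)/F(k) = q\cdot\frac{1-(1-x)^{k+1}}{1-(1-x)^k}$, argue it is decreasing in $k$ (hence $F$ is unimodal on $\{1,2,\ldots\}$), solve for the real crossover where the ratio equals one, and round. You go slightly further than the paper in one useful respect: you actually prove the monotonicity of the second factor via the rewrite $\frac{1-a^{k+1}}{1-a^k} = a + \frac{1-a}{1-a^k}$ with $a = 1-x$, whereas the paper simply asserts it.
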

\begin{proof}
    Since passing a single solution yields success probability $x \geq p$ it is never beneficial to pass $0$ solutions. We now analyze the function $g(k) \defeq q^{k - 1}(1 - (1 - x)^k)$ that we aim to maximize. We examine how this function changes as we increase $k$.

    We have $f(k + 1)/g(k) = q\frac{1 - (1 - x)^{k + 1}}{1 - (1 - x)^k}$. Since the term $\frac{1 - (1 - x)^{k + 1}}{1 - (1 - x)^k}$ is monotonically decreasing with $k$, we have to find the largest $k$ for which $q\frac{1 - (1 - x)^{k + 1}}{1 - (1 - x)^k} \geq 1$ which is equivalent to
    \begin{align} \label{ineq:growth}
        (1 - x)^k \geq \frac{1 - q}{1 - q(1 - x)}.
    \end{align}
    We then solve for the continuous $k'$ for which $\eqref{ineq:growth}$ holds with equality by taking the logarithm on each side. We obtain that $(1 - x)^{k'} = \frac{1 - q}{1 - q(1 - x)}$ for
    \begin{align*}
        k' = \frac{\log\left(\frac{1 - q}{1 - (1- x)q}\right)}{\log(1 - x)}.
    \end{align*}
    Since this shows there is still an increase when going from $k'$ to $k' + 1$ in this case, we obtain that the optimal discrete solution is as claimed.
\end{proof}

We observe that whenever progress is possible for some $k$, setting $k = 2$ also makes progress. This property of the exponential decay model enables us to derive a closed form solution of the maximum achievable success probability.

\begin{corollary}\label{corr:two_makes_progress}
    Whenever $\arg \max_{k \in \N_{\geq 0}} q^{k - 1}(1 - (1 - x)^k) > k$, then $q(1 - x)^2 \geq x$.
\end{corollary}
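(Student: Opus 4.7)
The plan is to use the unimodality of $g(k) := q^{k-1}(1-(1-x)^k)$ on positive integers $k$, which is implicit in \cref{clm:optimal_k}. First I would verify unimodality by showing the ratio
\[
\frac{g(k+1)}{g(k)} \;=\; q \cdot \frac{1-(1-x)^{k+1}}{1-(1-x)^k}
\]
is strictly decreasing in $k$. Setting $y = 1-x \in (0,1)$, one has $\frac{1-y^{k+1}}{1-y^k} = 1 + \frac{y^k(1-y)}{1-y^k}$, which decreases in $k$ because $\frac{y^k}{1-y^k}$ does (its numerator decays while its denominator grows). Hence $g$ strictly increases to a unique peak on $\mathbb{N}_{\ge 1}$ and then strictly decreases.

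Next, I would read the hypothesis ``$\arg\max g > k$'' as ``$\arg\max g \ge 2$'' (the natural interpretation in view of the preceding paragraph, which states that ``$k=2$ also makes progress''). Unimodality immediately gives $g(2) \ge g(1)$. Substituting $g(1) = x$ and $g(2) = q(1-(1-x)^2) = qx(2-x)$ and dividing by $x > 0$ yields $q(2-x) \ge 1$, equivalently
\[
q\bigl(1-(1-x)^2\bigr) \ge x, \qquad\text{i.e.,}\qquad q(1-x)^2 \le q - x.
\]
The displayed conclusion $q(1-x)^2 \ge x$ in the statement appears to be a typographical slip for this inequality; the actual mathematical content captured by the argmax hypothesis is the ``$k=2$ makes progress'' condition, not a direct lower bound on $q(1-x)^2$. (Indeed, the literal inequality is refuted by $q=0.99$, $x=0.4$, for which the argmax is enormous but $q(1-x)^2 = 0.3564 < 0.4$.)

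The only real technical step is the ratio monotonicity, which is a short algebraic calculation. The substantive content of the corollary then follows by a single substitution of $k=1,2$ into $g$. The main obstacle is not mathematical but bibliographic: the displayed inequality needs to be reconciled with the derivation, which I would do by flagging the typo and presenting the inequality in the form $q(1-(1-x)^2) \ge x$ (equivalently $q(1-x)^2 \le q - x$) that is actually used in the sequel.
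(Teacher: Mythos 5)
Your proof is correct and takes essentially the same route as the paper: both arguments rest on the unimodality of $g(k)=q^{k-1}\bigl(1-(1-x)^k\bigr)$ coming from the decreasing ratio in \cref{clm:optimal_k}, and then compare $g(2)$ with $g(1)=x$ (you additionally verify the ratio monotonicity explicitly, which the paper only asserts). Your reading of the statement is also right: the paper's own proof establishes $q\bigl(1-(1-x)^2\bigr)\ge x$, which is the form used later when taking the fixed point of $q\bigl(1-(1-x)^2\bigr)$, so the displayed inequality $q(1-x)^2\ge x$ (and the ``$>k$'' in the hypothesis) are typographical slips, as your counterexample confirms.
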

\begin{proof}
    As observed in the proof of \cref{clm:optimal_k}, the function $g(k) = q^{k - 1}(1 - x)^k$ first monotonically increases until it reaches the optimal step $k$, and then monotonically decreases. Since $g(1) = x$, we are guaranteed that $g(2) > x$ whenever the optimal $k$ is at least $2$. This proves the corollary.
\end{proof}

\paragraph{The maximum achievable success probability. }

Similar to the case of the uniform model, we study the maximum achievable success probability with the exponential decay function.

\begin{lemma}
    The maximum achievable success probability of a $(\calA^{(e_q, p)}_d, n)$-reasoning algorithm with $q \geq p$ is
    \begin{align*}
        \max\left(p, 2 - \frac{1}{q}\right).
    \end{align*}
\end{lemma}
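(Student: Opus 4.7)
The plan is to establish matching upper and lower bounds, both equal to $y := \max(p, 2-1/q)$.

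For the lower bound, the value $p$ is achieved by a single call to $\calA^{(e_q, p)}(\emptyset)$. For the value $2-1/q$ (relevant when $p \le 2-1/q$, which in particular forces $q > 1/2$), I invoke \cref{prop:binary_opt} with branching factor $k = 2$: substituting $f(2) = q$ and $g(2) = 0$, the fixed-point equation $x = f(2) - (1-x)^2(f(2) - g(2))$ reduces to $x = q(2x - x^2)$, whose roots in $[0,1]$ are $0$ and $2-1/q$. Since $g(0) = p > 0$, the branching algorithm (\cref{alg:branching}) with $k_i = 2$ at every level produces the sequence $x_{L+1} = q(2x_L - x_L^2)$ starting at $x_0 = p > 0$; concavity keeps the sequence monotone increasing and bounded above by $2-1/q$, so it converges to the unique nonzero fixed point $2-1/q$.

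For the upper bound I induct on $i$ to show $\Pr[\score(s_i) = 1] \le y$ for every oracle call of any $(\calA^{(e_q, p)}, n)$-reasoning algorithm $(S_1, \ldots, S_n)$. The base case $i = 1$ gives $\Pr[\score(s_1) = 1] = g(0) = p \le y$. For $i \ge 2$ with $k := |S_i|$: if $k = 0$ the probability equals $p \le y$; if $k = 1$ it equals $\Pr[\score(s_j) = 1] \le y$ by induction (since $f(1) = 1$). For $k \ge 2$, the exponential decay oracle is weakly monotone, so \cref{cor:monotone_binary} (an FKG-type inequality) combined with the inductive hypothesis yields $\Pr[\score(s_j) = 0 \text{ for all } j \in S_i] \ge (1-y)^k$, whence
\begin{align*}
\Pr[\score(s_i) = 1] \le q^{k-1}\bigl(1 - (1-y)^k\bigr).
\end{align*}

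It remains to verify $q^{k-1}(1 - (1-y)^k) \le y$ for every $k \ge 2$. By \cref{clm:optimal_k} the function $k \mapsto q^{k-1}(1 - (1-y)^k)$ is unimodal. If $y > 2 - 1/q$ (so $y = p$), the contrapositive of \cref{corr:two_makes_progress} places the optimum at $k = 1$ with value exactly $y$; the bound follows for all $k \ge 2$. If $y = 2 - 1/q$, the explicit formula in \cref{clm:optimal_k} places the optimum at $k = 2$ with value $q(2y - y^2) = qy(2-y) = qy \cdot (1/q) = y$, and unimodality extends the bound to $k \ge 3$. The main obstacle is precisely this boundary case $y = 2 - 1/q$: here \cref{corr:two_makes_progress} is tight at $k = 2$, and one must invoke \cref{clm:optimal_k}'s unimodal structure explicitly to rule out any $k \ge 3$ strictly beating $y$.
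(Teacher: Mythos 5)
Your argument follows the same high-level route as the paper's (branching algorithm with $k=2$ for the lower bound, the fixed point of $x\mapsto q\bigl(1-(1-x)^2\bigr)$, and unimodality of $k\mapsto q^{k-1}\bigl(1-(1-y)^k\bigr)$ from \cref{clm:optimal_k} and \cref{corr:two_makes_progress}), but you carry out the upper bound by an explicit induction rather than citing \cref{prop:binary_opt} directly, and that change matters. For the exponential decay model $f(1)=q^{0}=1$ and $g(1)=0$, so the $k=1$ fixed-point equation in \cref{prop:binary_opt} degenerates to the identity $x=x$ and gives $x_1^*=1$; the bound $\sup_k x_k^*$ that the paper invokes is therefore vacuous here. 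Your observation that a size-one context is a probability-preserving no-op (so the induction passes through it unchanged) is exactly what is needed to repair this, and your case split $y=p$ versus $y=2-1/q$ (via the contrapositive of \cref{corr:two_makes_progress} and the unimodality argument, respectively) also makes the $p \ge 2-1/q$ regime explicit, which the paper's one-line proof leaves implicit. In short: same underlying idea, but your version is the rigorous one and closes a genuine gap in the paper's sketch.
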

\begin{proof}
    We start with the same observations as in the proof of \cref{sec:succ_prob}, namely that all solutions generated by a reasoning algorithm are either positively correlated or independent, and that it is always optimal to pass independent solutions with as high correctness probability as possible. We therefore obtain from \Cref{prop:binary_opt} that the solution has to be a fixed point of the function $f(x) = q(1 - (1 - x)^2)$ because the largest fixed point is achieved at $k = 2$ by \Cref{corr:two_makes_progress}. This  function has only one fixed point $x = 2 - \frac{1}{q}$.
\end{proof}

We remark that it is never useful to pass more than one solution as context to $\calA_d^{(e_q, p)}$ when $q \leq \frac{1}{2}$, as in this case the model performs worse than randomly sampling a solution from the context. Therefore, we have $q \geq 1/2$ for all boostable models.

\paragraph{Convergence analysis. }

In the following, we analyze the speed of convergence. In contrast to \Cref{sec:convergence}, we focus on the behavior when the initial success probability approaches $0$. This corresponds to the setting where boosting is the most useful. 

Here, we assume $k = 2$ throughout to achieve a lower bound. Furthermore, we will artificially weaken the oracle by a factor $(1 - \epsilon)$. This allows us to immediately apply \Cref{lem:branching_to_genetic} to obtain a genetic algorithm and will not significantly change the convergence analysis.

We consider the branching algorithm that starts by generating a first generation of solutions with success probability $p_0 = p$, and then iteratively generates the next generation of solutions with success probability $p_i$ by combining two fresh solutions from the previous generation. Formally, this is equivalent to the branching algorithm (See \Cref{alg:branching}) with $k_i = 2$ for all $i$.

\begin{lemma}\label{lem:conv_exp}
    Let $p_{\max} = 2 - \frac{1}{q} > p$. Then we obtain a solution with success probability $p_{\max} - \epsilon$ after $L = O(\frac{1}{\epsilon}\log(1/p))$ generations. This also holds if the success probability of the oracle is decreased by a multiplicative factor $1 - \epsilon/8$.
\end{lemma}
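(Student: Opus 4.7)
The plan is to analyze the deterministic recursion $p_{i+1}=q'\bigl(2p_i-p_i^2\bigr)=q'\bigl(1-(1-p_i)^2\bigr)$ produced by $k=2$ branching with $p_0=p$, where $q'$ is the effective per-call accuracy ($q'=q$ in the clean version, $q'=(1-\epsilon/8)q$ under the weakening). Its unique positive fixed point $p'_{\max}=2-1/q'$ satisfies $p'_{\max}\ge p_{\max}-\epsilon/2$ by direct computation, so it suffices to show $p_L\ge p'_{\max}-\epsilon/2$. The substitution $u_i:=p'_{\max}-p_i$ combined with the fixed-point relation gives the clean one-dimensional dynamics
\[
u_{i+1}=u_i\bigl(2(1-q')+q'u_i\bigr),
\]
and the bracketed factor is at most $1$ throughout $[0,p'_{\max}]$, so $(u_i)$ is monotone non-increasing.

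I would then split the trajectory into two phases, parametrized by $\delta:=q'-1/2>0$. In the \emph{growth phase}, while $p_i\le p'_{\max}/2$, the inequality $p_{i+1}\ge q'(2-p'_{\max}/2)p_i=(q'+1/2)p_i=(1+\delta)p_i$ shows that $p_i$ grows by a factor of at least $1+\delta$ per step and reaches $p'_{\max}/2$ after at most $O\bigl(\log(1/p)/\delta\bigr)$ generations. In the \emph{contraction phase}, once $u_i\le p'_{\max}/2$, one uses $q'u_i\le q'p'_{\max}/2=(2q'-1)/2=\delta$ to get $2(1-q')+q'u_i\le 1-\delta$, so $u_i$ contracts geometrically at rate $1-\delta$ and drops below $\epsilon/2$ within $O\bigl(\log(1/\epsilon)/\delta\bigr)$ further generations. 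The total number of generations is therefore $L=O\bigl((\log(1/p)+\log(1/\epsilon))/\delta\bigr)$.

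The last step is to replace $1/\delta$ by $1/\epsilon$. From $p'_{\max}=(2q'-1)/q'=2\delta/q'\le 4\delta$ (since $q'\le 1$) one obtains $\delta\ge p'_{\max}/4$, and the lemma is vacuous unless $\epsilon<p'_{\max}$, giving $\delta\ge\epsilon/4$. Substituting $1/\delta\le 4/\epsilon$ and absorbing $\log(1/\epsilon)$ into $\log(1/p)$ in the regime of interest (where $\epsilon$ is not polynomially smaller than $p$) yields the advertised $L=O\bigl(\log(1/p)/\epsilon\bigr)$. The weakened version follows without additional work because the $(1-\epsilon/8)$-weakening only shifts the fixed point by $O(\epsilon)$ and decreases $\delta$ by at most $O(\epsilon)$, both of which are absorbed into the $\epsilon/2$ slack.

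The main obstacle is the behavior when $q$ is close to $1/2$: the contraction rate $1-\delta$ degenerates to $1$, so one cannot hope for $\log(1/\epsilon)$ dependence on the error and the bound $1/\epsilon$ is forced. The key identity that saves the argument is $\delta=qp'_{\max}/2=\Theta(p'_{\max})$, which shows that slow convergence occurs precisely when the target $p'_{\max}$ is small, so the meaningful error scale $\epsilon\le p'_{\max}$ automatically controls $1/\delta$.
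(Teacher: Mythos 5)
Your overall route is different from the paper's and is mostly sound, but there is a quantitative gap in the way you finish the contraction phase.

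The paper avoids any fixed-point linearization: it shows directly that the single recursion step satisfies $p_{i+1}\ge (1+\epsilon/4)\,p_i$ for \emph{all} $p_i\in[p,\,p_{\max}-\epsilon]$ (the inequality $p_i(1+\epsilon/2)\le q(1-(1-p_i)^2)$ reduces to $p_i\le 2-\frac{1+\epsilon/2}{q}$, which is implied by $p_i\le p_{\max}-\epsilon$ once $q\ge1/2$; the leftover $(1+\epsilon/4)$ vs.\ $(1+\epsilon/2)$ slack absorbs the $(1-\epsilon/8)$ weakening). One uniform multiplicative gain over the whole interval gives $L=O(\log(1/p)/\epsilon)$ with no extra log factor and no case analysis. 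Your two-phase argument (multiplicative growth up to $p'_{\max}/2$, then geometric contraction of $u_i=p'_{\max}-p_i$ at rate $1-\delta$) is a legitimate alternative, and your derivation of $u_{i+1}=u_i\bigl(2(1-q')+q'u_i\bigr)$ and of $\delta\ge\epsilon/4$ via $p'_{\max}\le 4\delta$ is correct.

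The gap is your bound $O(\log(1/\epsilon)/\delta)$ for the contraction phase followed by ``absorbing $\log(1/\epsilon)$ into $\log(1/p)$.'' That absorption is not available in general: the lemma is stated for all $\epsilon>0$, and when $\epsilon$ is super-polynomially smaller than $p$ you really would pay an extra $\log(1/\epsilon)$ factor. However, your own observation $p'_{\max}\le 4\delta$ closes the gap: the contraction phase starts from $u\le p'_{\max}/2\le 2\delta$, not merely $u\le 1$, so the number of contraction steps is
\[
\frac{\log\!\bigl(u_0/(\epsilon/2)\bigr)}{\log\!\bigl(1/(1-\delta)\bigr)} \le \frac{\log(4\delta/\epsilon)}{\delta} = \frac{1}{\epsilon}\cdot\frac{\log(4r)}{r}\quad\text{with } r=\delta/\epsilon\ge \tfrac14,
\]
and $\sup_{r\ge 1/4}\,\log(4r)/r=4/e$, so this is $O(1/\epsilon)$ with no logarithm. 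Combined with the growth phase $O(\log(1/p)/\delta)=O(\log(1/p)/\epsilon)$, you recover the stated $L=O(\log(1/p)/\epsilon)$ unconditionally. The conceptual point you identified---that $\delta=\Theta(p'_{\max})$, so slow mixing only happens when the target itself is small and $\epsilon\le p'_{\max}$ automatically bounds $1/\delta$---is exactly right; you just need to also use that the \emph{starting} distance $u_0$ is $O(\delta)$ rather than $O(1)$.
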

\begin{proof}
    We have $p_{i + 1} = q(1 - (1 - p_{i})^2)$. We will show that $p_{i + 1} \geq p_i (1 + \frac{\epsilon}{4})$, or equivalently
    \begin{align*}
        p_i (1 + \frac{\epsilon}{4}) \leq (1 - \epsilon/8) \cdot q(1 - (1 - p_i)^2)
    \end{align*}
    which is implied by
    \begin{align}\label{eq:gain}
        p_i(1 + \frac{\epsilon}{2}) \leq q (1 - (1 - p_i)^2)
    \end{align}
    as long as $p_i \leq p_{\max} - \epsilon$. Reformulating \eqref{eq:gain} yields
    \begin{align*}
        p_i \leq 2 - \frac{1 + \frac{\epsilon}{2}}{q}.
    \end{align*}
    Since $q \in [\frac{1}{2}, 1]$, this condition always holds when $p_i \leq p_{\max} - \epsilon$. Therefore, the success probability increases by at least a multiplicative factor $(1 + \frac{\epsilon}{4})$ in each iteration.
\end{proof}

\begin{corollary} \label{corr:conv_exp}
    There is a genetic algorithm (See \Cref{alg:genetic}) with $O(\log(1/\epsilon)\log(1/p)/(p \epsilon^3))$ oracle calls that achieves success probability $p_{\max} - \epsilon$. The sequential depth of the algorithm is $O(\log(1/p)/\epsilon)$.
\end{corollary}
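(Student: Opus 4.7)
The plan is to combine \Cref{lem:conv_exp} with the branching-to-genetic conversion of \Cref{lem:branching_to_genetic}. \Cref{lem:conv_exp} has been carefully stated so that the branching algorithm with $L = O(\log(1/p)/\epsilon)$ levels of $2$-way branching reaches success probability $p_{\max}-\epsilon$ even when the oracle is downgraded by a multiplicative factor $(1-\epsilon/8)$. This slack is exactly what \Cref{lem:branching_to_genetic} consumes when it simulates a branching algorithm by a genetic algorithm whose intermediate populations are large but finite.

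Concretely, first I would invoke \Cref{lem:conv_exp} to produce the target branching schedule with $L = O(\log(1/p)/\epsilon)$, all $k_i = 2$, and per-level success probabilities $p_0 = p \le p_1 \le \cdots \le p_L = p_{\max}-\epsilon$. I would then apply \Cref{lem:branching_to_genetic} with the degradation parameter set to $\epsilon/8$, noting that the actual oracle $\calA$ trivially $\tfrac{1}{1-\epsilon/8}$-dominates its $(1-\epsilon/8)$-weakened version for which the branching analysis holds. \Cref{lem:branching_to_genetic} then prescribes population sizes
\[
s_i \ge -\frac{2\log(\epsilon/(8L))}{p_{i-1}\,(\epsilon/8)^2} = O\!\left(\frac{\log(L/\epsilon)}{p_{i-1}\,\epsilon^2}\right).
\]
Since $p_{i-1}\ge p_0 = p$ for every $i$, the worst case is $s_i = O(\log(L/\epsilon)/(p\,\epsilon^2))$, and the total number of oracle calls is bounded by
\[
\sum_{i=1}^L s_i = O\!\left(\frac{L\log(L/\epsilon)}{p\,\epsilon^2}\right) = O\!\left(\frac{\log(1/p)\,\log(L/\epsilon)}{p\,\epsilon^3}\right) = O\!\left(\frac{\log(1/p)\log(1/\epsilon)}{p\,\epsilon^3}\right),
\]
where in the last step I substitute $L = O(\log(1/p)/\epsilon)$ and absorb a $\log\log(1/p)$ factor into $\log(1/\epsilon)$ (treating the regime where the two are comparable or $\log(1/\epsilon)$ dominates). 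The sequential depth of the resulting genetic algorithm equals the depth of the underlying branching algorithm, which is $L = O(\log(1/p)/\epsilon)$, giving the second claim.

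There is no real obstacle here: the heavy lifting is already done by \Cref{lem:conv_exp} (the geometric per-level improvement with built-in robustness to $(1-\epsilon/8)$-degradation) and by \Cref{lem:branching_to_genetic} (the Chernoff-based population sizing). The only care point is ensuring the two $\epsilon$-slacks match: the factor $\epsilon/8$ chosen in \Cref{lem:conv_exp} is exactly what lets us feed a $\tfrac{1}{1-\epsilon/8}$-dominated oracle into \Cref{lem:branching_to_genetic} and still land at success probability $p_{\max}-\epsilon$. Once this bookkeeping is in place, the oracle-call count and the depth bound fall out of a direct summation.
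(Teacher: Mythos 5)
Your proof is correct and follows essentially the same route as the paper, which simply invokes \Cref{lem:conv_exp} and \Cref{lem:branching_to_genetic} without further elaboration; you have supplied the bookkeeping faithfully, and you correctly identify that the $\epsilon/8$ robustness slack in \Cref{lem:conv_exp} is exactly what is consumed when \Cref{lem:branching_to_genetic} requires a $\frac{1}{1-\epsilon/8}$-dominating oracle.

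Two small remarks worth noting. First, you honestly flag the residual $\log\log(1/p)$ factor coming from $\log(L/\epsilon)$; this is a genuine (if minor) discrepancy with the corollary as stated, since in the regime $\epsilon = \Theta(1)$, $p\to 0$ your bound is $\Theta(\log(1/p)\log\log(1/p)/p)$ rather than $\Theta(\log(1/p)/p)$. Second, your estimate $\sum_{i=1}^{L} s_i = O(L\cdot \log(L/\epsilon)/(p\epsilon^2))$ is obtained by uniformly lower-bounding $p_{i-1}\ge p$; but \Cref{lem:conv_exp} gives $p_i \geq p(1+\epsilon/4)^i$, so the geometric series yields $\sum_i 1/p_{i-1} = O(1/(p\epsilon))$, and hence $\sum_i s_i = O\bigl(\log(L/\epsilon)/(p\epsilon^3)\bigr)$, which is \emph{smaller} than the stated $O(\log(1/\epsilon)\log(1/p)/(p\epsilon^3))$ by roughly a $\log(1/p)$ factor. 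So the corollary's bound is not tight, and your looser sum happens to land on it (up to the $\log\log$ term). Neither point is a gap in your argument; they are observations about the sharpness of the stated result.
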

\begin{proof}
    Directly follows from \Cref{lem:conv_exp} and \Cref{lem:branching_to_genetic}.
\end{proof}

In our proof, we merely focus on context sizes equal to $2$. For very large $q$, there can however be substantial gains in the number of rounds when combining more answers at a time. In particular, for $q = (1 - p)$ we can get rid of the dependence on $\log(1/p)$ in the number of rounds entirely by initially combining $\approx 1/p$ solutions. For most practical settings where $q$ should be thought of as a constant close to $1$, the studied setting captures the behavior.

\subsection{Polynomial Decay}

In this section, we examine polynomial decay functions (See \Cref{def:poly_decay}). Since these functions drop off much slower than the exponential decay we examined in the previous section, they allow much faster boosting. In particular, we can show that approximately $\approx \log \log 1/p$ iterations suffice to boost to constant success probability with a simple algorithm.

\paragraph{A simple algorithm. }

We describe a simple algorithm that automatically scales $k$ and quickly increases the success probability to a small constant. Given access to independent solutions with correctness probability at least $p$, the algorithm samples $k = \floor{1/p}$ such solutions as context $C$ and passes them to the oracle $\calA^{(p_q, p)}_d(C)$. Then, the oracle generates a solution that is correct with probability $p'$ bounded by
\begin{align*}
    p' \geq \frac{1}{k^q}(1 - (1 - p)^k) \geq p^q(1 - (1 - p)^{1/p - 1}).
\end{align*}
For $p \leq 1/2$, we obtain
\begin{align*}
    p' \geq p^q(1 - 2/e).
\end{align*}
For a constant $q < 1$, this series will recover a squaring type behavior. We next show that this series increases very quickly for a general constant $c$. This later allows us to replace $(1 - 2/e)$ with $(1 - 2/e)(1 - \epsilon')$ to weaken the oracle and obtain convergence guarantees for the genetic algorithm via \Cref{lem:branching_to_genetic}.

\begin{claim}\label{clm:series_poly}
    Consider $p_0 = p$, and $p_i = p_{i - 1}^q \cdot c$ for $i \geq 1$. Then, $p_n = p^{q^n} \cdot c^{\frac{1 - q^n}{1 - q}}$ for $n \geq 1$.
\end{claim}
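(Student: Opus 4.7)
The plan is to prove the closed form by induction on $n$, since the recurrence is multiplicative in $c$ and exponential in $q$, making the expression easy to verify step by step. Both factors $p^{q^n}$ and $c^{(1-q^n)/(1-q)}$ transform cleanly under the map $x \mapsto x^q \cdot c$, so I expect no subtlety beyond routine algebra.

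For the base case $n=1$, I would simply compute $p_1 = p_0^q \cdot c = p^q \cdot c$, and match it against the formula $p^{q^1} \cdot c^{(1-q^1)/(1-q)} = p^q \cdot c^1 = p^q \cdot c$.

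For the inductive step, assume the formula holds at level $n$. Then
\begin{align*}
p_{n+1} &= p_n^q \cdot c = \left(p^{q^n} \cdot c^{(1-q^n)/(1-q)}\right)^q \cdot c \\
&= p^{q^{n+1}} \cdot c^{q(1-q^n)/(1-q) + 1} = p^{q^{n+1}} \cdot c^{(q - q^{n+1} + 1 - q)/(1-q)} = p^{q^{n+1}} \cdot c^{(1 - q^{n+1})/(1-q)},
\end{align*}
which is exactly the formula at level $n+1$.

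There is no real obstacle here; the only point that requires a moment of care is the exponent arithmetic $q(1-q^n)/(1-q) + 1 = (1 - q^{n+1})/(1-q)$, which follows by placing everything over a common denominator. The formula is then immediate, and its usefulness for the surrounding discussion is that it exhibits the doubly-exponential decay of $q^n$ in the exponent of $p$, which is the doubly-exponential boosting behavior the section wishes to exploit.
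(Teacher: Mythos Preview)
Your proof is correct and follows essentially the same approach as the paper: both argue by induction on $n$, verify the base case $n=1$ directly, and carry out the identical exponent arithmetic $q(1-q^n)/(1-q)+1=(1-q^{n+1})/(1-q)$ in the inductive step.
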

\begin{proof}
    The proof is by induction. The base case $n = 1$ directly follows from the definition of $p_1$.

    Now, we assume the recurrence holds for $n$ and aim to show it for $n + 1$. We have
    \begin{align*}
        p_{n + 1} &= p_n^q \cdot c \\
                  &= \left(p^{q^n} \cdot c^{\frac{1 - q^n}{1 - q}}\right)^q \cdot c \\
                  &= p^{q^{n + 1}} \cdot c^{1 + q\frac{1 - q^n}{1 -q}} = p^{q^{n  + 1}}c^{\frac{1 - q^{n + 1}}{1 - q}}.
    \end{align*}
\end{proof}

From \Cref{clm:series_poly}, we observe that this simple algorithm converges to a success probability of $c^{\frac{1}{1 - q}}$. For constant $q$ and $c$, this corresponds to a constant success probability.

\begin{claim}\label{clm:convergence_poly}
    We have $p_n \geq (1 - \epsilon)c^{\frac{1}{1 - q}}$ for some $n = O(\log \log(1/p)/q + \log(\epsilon^{-1})/q)$.
\end{claim}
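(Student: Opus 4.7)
The claim follows almost immediately from the explicit closed form in \Cref{clm:series_poly}. The natural idea is to measure distance to the fixed point $p^* = c^{1/(1-q)}$ in log-scale, since the recurrence $p_i = c \cdot p_{i-1}^q$ becomes affine under $\log$ and therefore admits a clean linear analysis.

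First, starting from $p_n = p^{q^n} \cdot c^{(1-q^n)/(1-q)}$, I would divide by $p^* = c^{1/(1-q)}$ to obtain
\begin{align*}
  \frac{p_n}{p^*} \;=\; p^{q^n} \cdot c^{-q^n/(1-q)} \;=\; \left(\frac{p}{p^*}\right)^{q^n}.
\end{align*}
Taking logarithms gives the key identity $\log(p^*/p_n) = q^n \log(p^*/p)$, which makes explicit that the log-gap to the fixed point contracts by a factor of $q$ each step. Since $c \leq 1$ implies $p^* \leq 1$, and the boosting regime is $p \leq p^*$, we have $0 \leq \log(p^*/p) \leq \log(1/p)$.

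Second, I would translate the target bound $p_n \geq (1-\epsilon) p^*$ into a bound on the log-gap: it suffices to ensure $\log(p^*/p_n) \leq \log(1/(1-\epsilon))$, and using the elementary estimate $\log(1/(1-\epsilon)) \geq \epsilon$ (valid for $\epsilon \in (0, 1/2]$, with the other range handled trivially), the task reduces to
\begin{align*}
  q^n \log(1/p) \;\leq\; \epsilon.
\end{align*}
Solving for $n$, any $n \geq \log(\log(1/p)/\epsilon)/\log(1/q) = (\log\log(1/p) + \log(1/\epsilon))/\log(1/q)$ suffices.

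Finally, to match the stated form $O(\log\log(1/p)/q + \log(1/\epsilon)/q)$, I would use the elementary inequality relating $\log(1/q)$ and $q$ in the relevant regime (for instance $\log(1/q) \geq \Omega(q)$ when $q$ is bounded away from $1$, and the case $q$ very close to $1$ is handled by the fact that the recurrence converges in a single step as $q \to 1$ is replaced by smaller values; alternatively, one simply absorbs the $\log(1/q)$ factor into the $O(\cdot)$ constant). There is essentially no obstacle here: the argument is a one-line consequence of \Cref{clm:series_poly} once one passes to logarithms. The only mildly delicate step is bookkeeping the translation between $\log(p^*/p_n) \leq \epsilon$ and $p_n \geq (1-\epsilon) p^*$, which is resolved by the standard inequalities $\epsilon \leq \log(1/(1-\epsilon)) \leq 2\epsilon$ on $[0, 1/2]$.
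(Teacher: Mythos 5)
Your proof is correct and proceeds from the same core ingredient as the paper --- the explicit closed form from \cref{clm:series_poly}, followed by taking logarithms and solving for $n$ --- but the organization differs in a way worth noting. The paper factors $p_n/p^*$ into $p^{q^n}$ and $c^{-q^n/(1-q)}$ and bounds \emph{each} factor below by $1-\epsilon/2$; this is awkward, because when $c<1$ the second factor is automatically $\geq 1$ (its exponent is negative), so that half of the argument imposes no real constraint on $n$, and the corresponding algebra in the paper's proof has a sign that doesn't quite track. Your reorganization around the single quantity $p_n/p^* = (p/p^*)^{q^n}$ collapses the argument to one log-gap inequality, $\log(p^*/p_n) = q^n\log(p^*/p)$, and sidesteps that issue entirely while reaching the same bound. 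You are also right to be careful about the passage from $1/\log(1/q)$ to $1/q$ in the final clean-up: these agree up to constants only when $q$ is bounded away from $1$, which is the regime of interest here since $p^* = c^{1/(1-q)} \to 0$ as $q \to 1^-$ and the claim is vacuous once $p \geq p^*$; the paper writes the $1/q$ bound without flagging this point.
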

\begin{proof}
    We have $p_n = p^{q^n} c^{\frac{1 - q^n}{1-q}} = p^{q^n}c^{\frac{1}{1-q}}c^{\frac{-q^n}{1-q}}$.
    We first show that $c^{\frac{-q^n}{1-q}} \geq (1 - \epsilon/2)$ for $n = O(\log \log(1/p) + \log(\epsilon^{-1}/q)$. We take the logarithm on both sides
    \begin{align*}
        \frac{-q^n}{1-q} \log(c) \leq \log(1 - \epsilon/2) \\
        q^n \geq - (1 - q) \frac{\log(1 - \epsilon/2)}{\log(c)}
    \end{align*}
    We then approximate $\log(1 - \epsilon) \approx \epsilon$ and collect the constant terms into $C$ obtaining the condition
    \begin{align*}
        n \geq C' \log(1/\epsilon)/q
    \end{align*}
    for some constant $C'$.

    Next, we show that $p^{q^n} \geq (1 -\epsilon/2)$. We have
    \begin{align*}
        p^{q^{n}} &\geq (1 - \epsilon/2) \\
        q^n \log(p) &\geq \log(1 - \epsilon/2) \\
        n &\geq C (\log(\epsilon^{-1}) + \log \log(1/p))/q
    \end{align*}
    for some large constant $C$. This concludes the proof.
\end{proof}

From \Cref{clm:convergence_poly}, we obtain that this algorithm indeed converges very quickly. Furthermore, it does so even when $c$ is slightly smaller. This allows it to pair very well with the genetic algorithm described in \Cref{alg:genetic}. We therefore derive a genetic algorithm. 

In the following, we focus on the behavior when the initial success probability approaches $0$. Once a reasonably high (e.g. constant) success probability is achieved, the type of convergence analysis from \Cref{sec:convergence} could be applied to obtain the speed of convergence for the final bit. 

\begin{lemma} \label{lem:conv_poly}
    There is a genetic algorithm that using the model $\calA^{(p_q, p)}_d$ finds a solution that is correct with probability at least $(1/16)^{1/(1 - q)}$ in $O(\frac{1}{pq} \log \log 1/p)$ oracle calls.
\end{lemma}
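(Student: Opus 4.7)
The plan is to realize the simple scheme of \Cref{clm:series_poly} as a branching algorithm with adaptive fan-in $k_i = \lfloor 1/p_{i-1}\rfloor$ at level $i$, where $p_{i-1}$ is the deterministic level-$(i-1)$ success probability predicted by the recurrence $p_i = p_{i-1}^q \cdot c$, and then to invoke \Cref{lem:branching_to_genetic} to turn it into a genetic algorithm. A constant weakening factor $(1-\epsilon)$ (say $\epsilon = 1/2$) is introduced for the conversion, so that the effective constant becomes $c = (1-\epsilon)(1-2/e) > 1/16$, guaranteeing that the fixed point $c^{1/(1-q)}$ of the weakened recurrence still exceeds $(1/16)^{1/(1-q)}$.

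First I would apply \Cref{clm:convergence_poly} to the weakened recurrence (with target precision set to a constant) to conclude that $L = O(\log\log(1/p)/q)$ levels of branching drive the predicted success probability above $(1/16)^{1/(1-q)}$. Then I would read off from \Cref{lem:branching_to_genetic} the required population sizes, $s_i = \Theta(\log(L/\epsilon)/(p_{i-1}\epsilon^2))$; treating $\epsilon$ as a constant and absorbing the slowly-growing $\log L$ term into the hidden constants, each $s_i$ is essentially $O(1/p_{i-1})$.

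The final step is the bookkeeping of the total oracle budget $\sum_{i=1}^L s_i$. As long as $p_{i-1}$ is below half the fixed point, the weakened recurrence yields $p_i/p_{i-1} = p_{i-1}^{q-1}c \geq 2$, so the values $1/p_{i-1}$ form a geometric series starting from $1/p$ with common ratio at most $1/2$, summing to $O(1/p)$; the remaining levels where $p_{i-1}$ is within a constant factor of the fixed point contribute at most $L$ terms of constant order, totaling $O(L) = O(\log\log(1/p)/q)$. Combining the two regimes gives $O(\tfrac{1}{pq}\log\log(1/p))$ as claimed. The main technical obstacle will be precisely this bookkeeping: separating the geometric early phase, where the sum telescopes to $O(1/p)$, from the slow tail phase near the fixed point, where constant-size populations multiply by the number of remaining levels, and simultaneously verifying that the weakened recurrence still preserves enough of the $(1-2/e)$ constant to keep the target fixed point above $(1/16)^{1/(1-q)}$ for every admissible $q$.
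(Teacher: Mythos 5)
Your proof proposal follows the same route as the paper's (extremely terse) proof: realize the scheme behind \cref{clm:series_poly} as a branching algorithm with adaptive fan-in $k_i = \lfloor 1/p_{i-1}\rfloor$, invoke \cref{clm:convergence_poly} with constant precision to get $L=O(\log\log(1/p)/q)$ levels, and convert via \cref{lem:branching_to_genetic} with a constant weakening factor. That much is right, and your weakening constant $c = (1-\epsilon)(1-2/e)$ with $\epsilon = 1/2$ does clear $1/16$.

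However, the bookkeeping step contains two errors. First, with the recurrence $p_i = p_{i-1}^q c$, the ratio at ``half the fixed point'' $p_{i-1} = \tfrac12 c^{1/(1-q)}$ is $p_i/p_{i-1} = p_{i-1}^{q-1}c = 2^{1-q}$, which is strictly less than $2$ whenever $q>0$; so the series $\sum 1/p_{i-1}$ does not halve at each step under your stated threshold, and the geometric sum gains a $1/(1-2^{-(1-q)})$ factor that blows up as $q\to 1$. The threshold at which the ratio is at least $2$ is actually $p_{i-1}\le (c/2)^{1/(1-q)}$, i.e. $2^{-1/(1-q)}$ times the fixed point. Second, in the ``late phase'' the terms $1/p_{i-1}$ are bounded by $(2/c)^{1/(1-q)}$, which is exponential in $1/(1-q)$ rather than ``constant order'' — the claim that these contribute $O(L)$ is false in general. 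These gaps do not ultimately sink the lemma, for a reason your proposal does not note: with $c>1/8$, the target $(1/16)^{1/(1-q)}$ is itself below the corrected geometric threshold $(c/2)^{1/(1-q)}$, so the entire run up to the target lies in the geometric regime — there is no late phase to account for, and the sum $\sum_i 1/p_{i-1}$ telescopes to $O(1/p)$ by the one-phase argument alone. You should replace the two-phase decomposition with this observation; as written, both the threshold computation and the late-phase bound are wrong, even though the final conclusion survives.
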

\begin{proof}
    Directly follows from \Cref{clm:convergence_poly} and \Cref{lem:branching_to_genetic} and $(1 - (1 - p)^{1/p - 1} \geq (1 - 2/e)$ for $p \geq 1/2$.
\end{proof}

\paragraph{The maximum achievable success probability. } To characterize the maximum achievable success probability we resort to \Cref{prop:binary_opt} since there does not appear to be an obvious simpler characterization.

\section{Conclusion} \label{sec:conclusion}

We introduce a theoretical model that formalizes reasoning oracles that are given an array of solutions as context. These formalize the experimentally observed behavior that adding correct solutions to the context helps, but the accuracy decreases with the context size. The derived theory provides a framework for developing and analyzing reasoning algorithms that have been shown to perform well in practice \cite{Ven25}. 

\paragraph{Future work. } The algorithms we develop are naturally parallel. It is interesting to further explore the trade off between solution quality, parallelism and the total number of model calls. We believe that less parallelism could lead to savings in logarithmic factors that are highly relevant given the cost of executing such algorithms in practice. 

\paragraph{Limitations and future model extensions. } We limit ourselves to the binary model, where a solution is either correct or not. While this enables us to devise simple experiments, this is a significant drawback when considering more difficult questions for which we cannot hope to obtain correct solutions directly. 

It therefore makes sense to extend our models to more fine grained score functions in the future. Such an extension should go hand in hand with the introduction of a measure of answer diversity. That is, because combining two partially correct solutions should only be beneficial if they complement each other. This could be modeled by additionally associating a unit vector with each solution where orthogonal vectors correspond to solutions that perfectly complement each other. These vectors can either be hidden alongside the score function or exposed through some oracle, where the former is simpler than the latter but yields less flexibility when developing algorithm. 

While our model offers exciting opportunities for extensions, we caution that it is important to ground them in experiments to ensure that the theory captures something interesting about the world we live in.

\newpage

\bibliographystyle{alpha}
\bibliography{refs}

\newpage

\appendix
\crefalias{section}{appendix}

\section{Experiments}\label{sec:experiments}

In this section, we verify some of our theoretical assumptions with the AIME 2025 math dataset and the Gemini 2.5 Pro model. The dataset became available after the model, which helps us avoid test set contamination. AIME 2025 has 30 questions, and we plot the average accuracy for each question across independent model calls in Figure~\ref{fig:overall_aime_acc}. The prompt consists only of the question. As expected, Gemini 2.5 Pro has a strong performance on many of the questions.

\begin{figure}[ht!]
    \centering

    \vspace{5mm} 
    
    \includegraphics[width=0.88\textwidth]{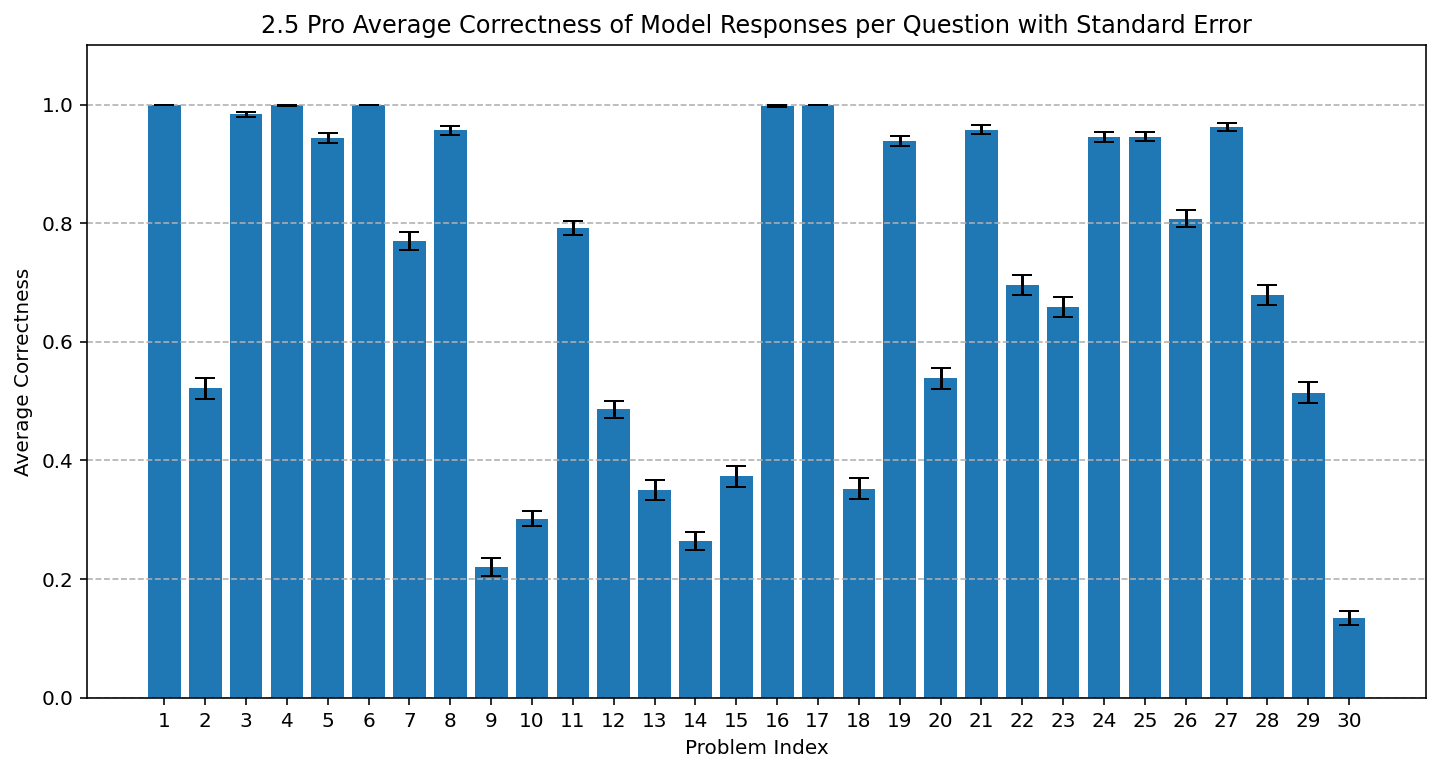}
    
    \caption{Average AIME 2025 accuracy per question for Gemini 2.5 Pro. We use 780 model calls per question. The error bars represent standard error.}
    \label{fig:overall_aime_acc}
\end{figure}

Next, we saved the outputs and picked questions 10 and 12 to examine model accuracy when solutions are provided in context. These are questions for which the model struggled but occasionally generated correct answers. We study two settings below.

\subsection{Fixing one correct solution and increasing incorrect solutions}
We start by investigating model accuracy when only one of the solutions is correct, and the number of incorrect solutions grows from $0$ to $12$. For each of the configurations, we make $30$ calls with Gemini 2.5 Pro and make sure none of the solutions are repeated across any of the calls. The solutions are also shuffled before placing them in context. Our results are in Figure~\ref{fig:pro_one_correct}, and we observe a decaying accuracy as the number of incorrect questions increases. This motivates our theoretical models with decay.

\begin{figure}[ht!]
    \centering
    
    \includegraphics[width=0.62\textwidth]{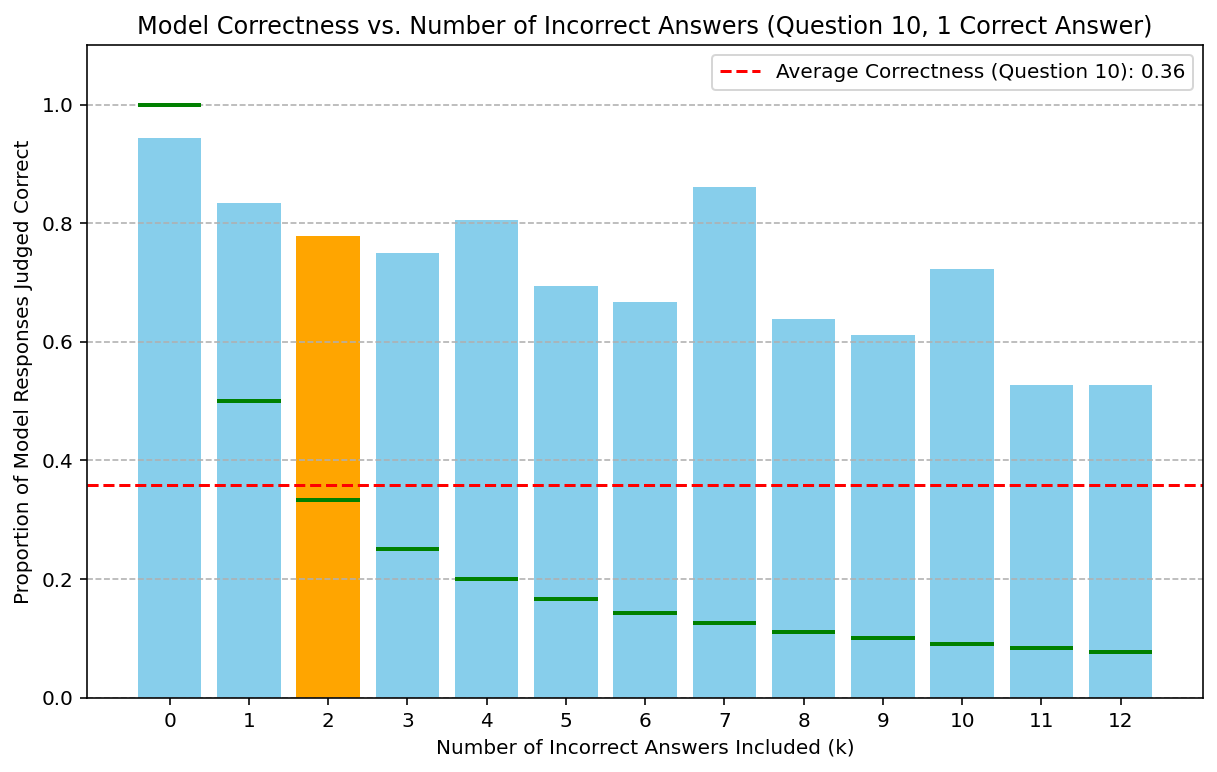} 
        
    \includegraphics[width=0.62\textwidth]{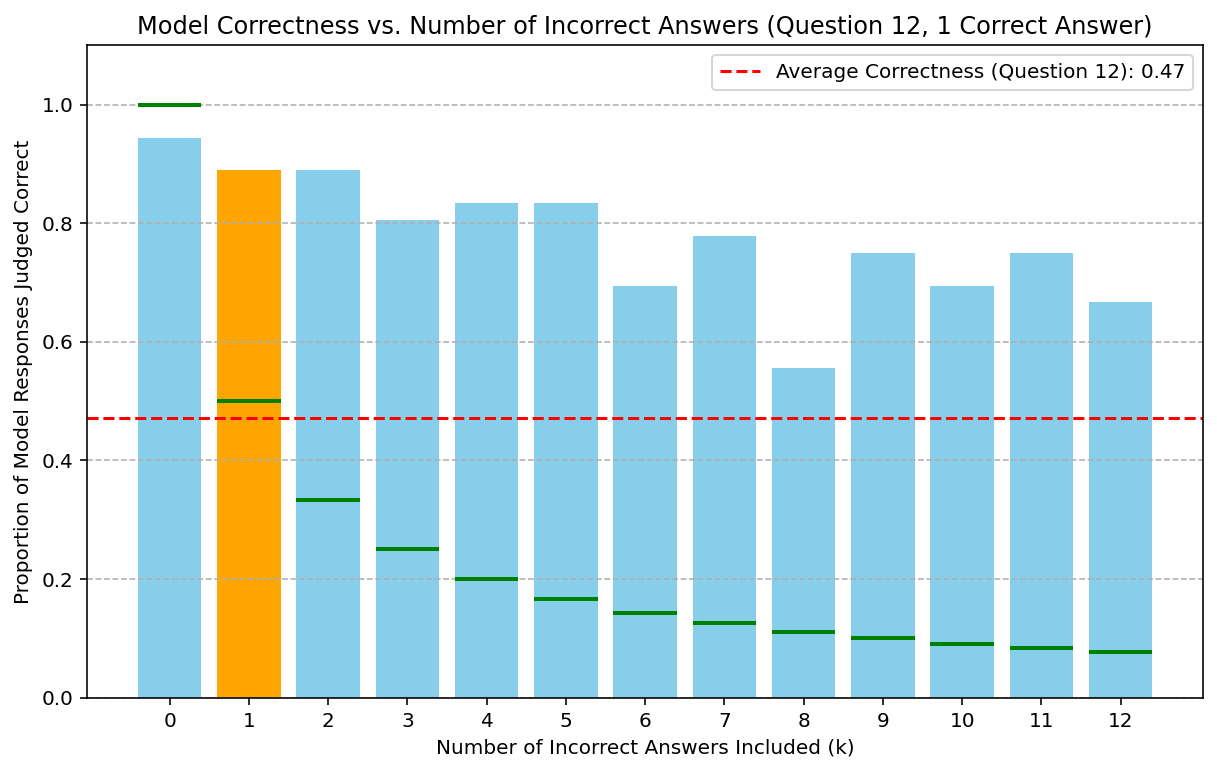}
    
    \caption{Gemini 2.5 Pro accuracy when providing $1$ correct solution answer and $k=0$ to $12$ incorrect solutions. The red line represents base accuracy of Pro on the question without any solutions. The green lines represent $1/(k+1)$, the accuracy when returning a solution in the context uniformly at random. The orange bar is the most likely configuration according to the base accuracy (the $k$ for which $1/(k+1)$ is closest to base accuracy). }
    \label{fig:pro_one_correct}
\end{figure}

\subsection{Fixing total number of solutions and varying correct solutions}

Finally, we move to a setting where instead of fixing one correct answer, we fix the total number of solutions and vary the number of correct versus incorrect answers.  We show results in Figure~\ref{fig:pro_five_total}. We again used $30$ calls per configuration, did not reuse solutions across any model calls, and shuffled the order of the solutions. 

We observe a very smoothly increasing accuracy as the number of correct solutions increases. We also note that the performance of this setup---sampling $5$ solution answers and providing them as context for a final model call---always improves over the base accuracy of the model on average. On the two questions we examined for Pro, the gap is about $40\%$, suggesting that it is a particularly strong verifier.

\begin{figure}[ht!]
    \centering
    
    \includegraphics[width=0.7\textwidth]{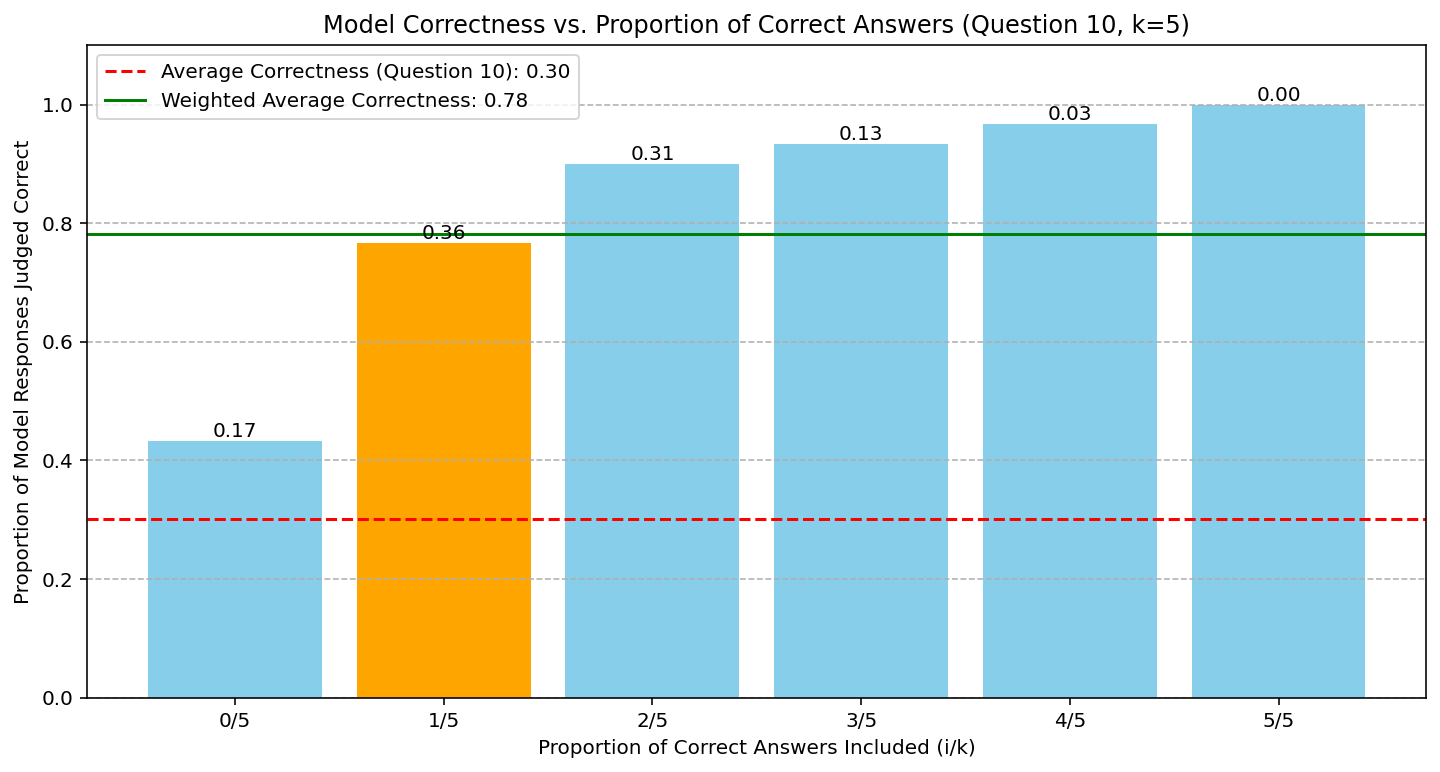}    
    
    \includegraphics[width=0.7\textwidth]{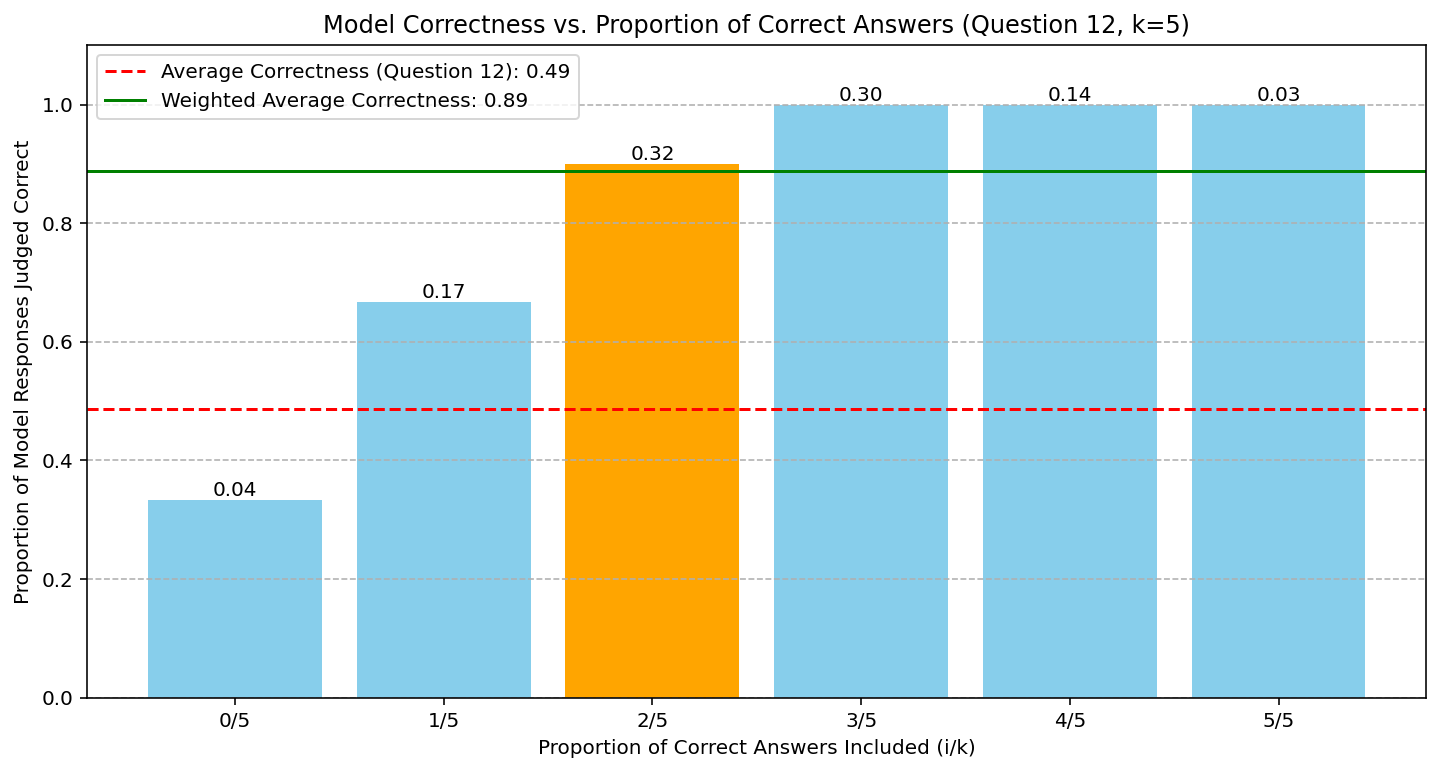}
    
    \caption{Gemini 2.5 Pro accuracy when providing $5$ total solutions and varying the number of correct versus incorrect answers. The red line represents base accuracy of Pro on the question without any solutions. The numbers on top of each bar represent the probability of that configuration according to the average correctness. The orange bar highlights the most likely configuration. The green line uses these weights to give the accuracy when sampling $5$ solutions and using them as context for a final model call.  }
    \label{fig:pro_five_total}
\end{figure}

\end{document}